\newtheorem{definition}{Definition}
\newtheorem{theorem}{Theorem}
\newtheorem{lemma}{Lemma}
\newtheorem{remark}{Remark}
\title{
Interpretation as Linear Transformation: A Cognitive-Geometric Model of Concepts and Meaning
}
\author*[1]{Chainarong Amornbunchornvej}
\affil*[1]{National Electronics and Computer Technology Center,
112 Phahonyothin Road, Khlong Nueng, Pathum Thani, 12120, Thailand\\
\email{chainarong.amo@nectec.or.th}}
\begin{document}

\maketitle

\begin{abstract}

 This paper develops a geometric framework for modeling concepts, motivation, and
influence across cognitively heterogeneous agents. Each agent is represented by
a personalized \emph{value space}, a vector space encoding the internal
dimensions through which the agent interprets and evaluates meaning. Evaluative
concepts are formalized as structured vectors---\emph{abstract beings}---whose
transmission is mediated by linear \emph{interpretation maps}. An abstract being survives
communication only if it avoids the null spaces of these maps, yielding a
structural criterion for intelligibility, miscommunication, and concept death.

Within this framework, I show how conceptual distortion, motivational drift,
and the limits of mutual understanding arise from
purely algebraic constraints. A central result---the \emph{No-Null-Space
Leadership Condition}---characterizes leadership as a property of
representational reachability rather than persuasion or authority. More
broadly, the model explains how abstract beings can propagate, mutate, or
disappear as they traverse diverse cognitive geometries.

The account unifies insights from conceptual spaces, social epistemology, and
AI value alignment by grounding meaning preservation in structural
compatibility rather than shared information or rationality. I argue that this
cognitive-geometric perspective clarifies the epistemic boundaries of influence
in both human and artificial systems, and offers a general foundation for
analyzing conceptual dynamics across heterogeneous agents.
\end{abstract}

\section{Introduction}

Much of human and artificial communication is modeled as if concepts and beliefs were
propositional items that can be transmitted intact from one mind to another.
On this view, misunderstanding arises primarily from epistemic failures:
incorrect beliefs, noisy signals, biased inference, or insufficient data.
Yet many cases of miscommunication and failed influence persist even when all
such epistemic factors are idealized. Agents may share evidence, speak
truthfully, and reason flawlessly, and still fail to understand one another.

This suggests that the fundamental barrier is not epistemic but
\emph{structural}. Agents differ in how they carve the world into dimensions of
meaning, how they evaluate alternatives, and how they encode motivational
significance. Communication therefore involves not the copying of content but
its \emph{transformation} into a new representational basis. A message that
travels between cognitively heterogeneous agents may be rotated, distorted,
selectively amplified, or annihilated depending on the geometry of the
recipient's evaluative space.

To make this structural dimension explicit, the paper introduces a cognitive–
geometric framework in which each agent possesses a personalized \emph{value
space}: a finite-dimensional vector space whose basis encodes the agent's
interpretive and evaluative structure. Concepts and goals are represented as
structured vectors—\emph{abstract beings}—that acquire meaning only relative to
an agent's basis. Communication from agent $A$ to agent $B$ is modeled by a
linear \emph{interpretation map} $T_{A \to B}$ that projects $A$'s abstract
beings into $B$'s value space. An abstract being survives this projection only if it
avoids the relevant null spaces, which determine the components that are
invisible or unintelligible to the recipient.

Throughout the paper, I use ``concept'' as the general term for abstract beings, and use ``belief'' informally for concepts that an agent has endorsed.

This geometric perspective provides unified explanations for a range of
philosophical phenomena: partial understanding, meaning drift, value
misalignment, motivational divergence, and the extinction of concepts within
populations. The core claim is that these arise from algebraic constraints on
representation rather than from irrationality or limited evidence. Even ideally
rational agents may systematically disagree or misunderstand one another
because their value spaces differ.

The framework also illuminates dynamic phenomena. As agents adopt new abstract
beings, their goal vectors and motivational gradients shift, allowing for
systematic modeling of goal drift, alignment with influential agents, and the
emergence of shared identities. Conversely, when an innovator holds a value
vector lying outside the convex hull of a group's current valuations, they can
introduce genuinely new cultural or conceptual directions that the group could
not generate endogenously.

Although leadership provides a vivid application, it is only one illustration
of the broader theoretical machinery. The model also explains
marketing-induced reconfiguration of value spaces.

The next subsection formalizes these insights by summarizing the paper's theoretical contributions and positioning them within existing philosophical and cognitive–scientific frameworks.

\subsection*{Theoretical Contributions}

This paper makes four contributions to the theory of conceptual representation,
interpretation, and influence.

\paragraph{(1) A geometric ontology of evaluative concepts.}
The framework models each agent's evaluative structure as a value space and
treats concepts as structured vectors—abstract beings—defined relative to an
agent's basis. This shifts the analysis of meaning from propositional content to
transformational structure, revealing that many forms of misunderstanding and
partial agreement arise from representational geometry rather than epistemic
error. Beliefs emerge as a special case: concepts that are endorsed and motivationally active within an agent's space.

\paragraph{(2) A structural account of intelligibility and influence.}
Communication is formalized as a linear interpretation map between agents'
value spaces. Whether a concept survives transmission depends on whether it
avoids the null spaces of these maps. This yields precise geometric criteria
for intelligibility, miscommunication, conceptual distortion, and concept death, and
establishes \emph{structural reachability} as a precondition for influence.

\paragraph{(3) A unified model of motivational dynamics.}
By treating motivational gradients as vectors in the same space as concepts, the
framework explains how adopting an abstract being alters an agent's goals and
action tendencies. This supports formal accounts of goal drift, value
convergence, and alignment with leaders or influential agents.

\paragraph{(4) A model of innovation and value-space expansion.}
The framework explains how genuinely novel evaluative directions arise when a
leader or innovator occupies a vector lying outside the convex hull of group
valuations. Such cases expand the group's representational geometry rather than
interpolate within it, offering a geometric account of conceptual or cultural
innovation.

Together, these contributions provide a unified cognitive–geometric foundation
for studying conceptual dynamics in heterogeneous populations. They clarify the
conditions under which meaning can be preserved, distorted, or lost across
agents, and they illuminate the epistemic boundaries of influence in both human
and artificial systems.

\section{Conceptual Foundations: Agents, Concepts, and Interpretive Geometry}
\label{sec:ConceptualFoundations}

Before developing the formal model, I outline the conceptual assumptions that motivate
the geometric approach. This section provides an intuitive bridge between philosophical
concerns about meaning, understanding, and value alignment, and the mathematical
structures introduced later.

\subsection{Agents as Value Spaces}

I assume that each agent possesses a \emph{personalized value space} $\mathcal{V}_A$, a vector
space encoding the dimensions along which the agent represents and evaluates meaning.
These dimensions may correspond to moral priorities, pragmatic considerations,
identity-related concerns, or epistemic preferences. This assumption aligns with
geometric models of cognition and conceptual spaces \cite{gardenfors2004conceptual,
gardenfors2014geometry}, which treat representational content as structured regions of
a cognitive geometry.

An evaluative concept for agent $A$ is represented as a vector $X_A \in \mathcal{V}_A$. Such vectors, which I
call \emph{abstract beings}, carry both semantic and motivational structure. Their norm
reflects subjective importance; their direction reflects the internal organization of the
concept.

\subsection{Interpretation as Transformative Mapping}
\label{sec:InterpretationMapping}

Understanding between agents is not assumed to be direct. Instead, when agent $A$
expresses a concept to agent $B$, the concept undergoes an interpretive transformation
\[
T_{A \rightarrow B} : \mathcal{V}_A \to \mathcal{V}_B,
\]
which projects $A$'s representational structure into $B$'s cognitive geometry. The map
may distort or omit components of the concept. Concepts that fall into the null space of
$T_{A \rightarrow B}$ become unintelligible to $B$, providing a structural account of
misunderstanding, interpretive opacity, or motivational loss.

This formalization connects to Dennett's intentional stance \cite{dennett1987intentional} in a precise way. On Dennett's account, interpreting an agent means attributing beliefs and desires that make the agent's behavior predictable and rational. In Cognitive Geometry, this attribution corresponds to selecting an interpretation map $T_{A \to B}$ that renders $A$'s abstract beings coherent within $B$'s value space. Forward and backward consistency conditions (formalized in Appendix~\ref{apx:InMap}) capture what it means for such an attribution to be stable: $B$'s interpretation of $A$'s concept must be approximately recoverable by $A$, and vice versa. When no such map preserves the concept's structure, the intentional stance breaks down—not because the agent is irrational, but because the interpretive geometry is incompatible. The intentional stance is thus rendered algebraically: understanding is a function of mutual interpretability across cognitive geometries.

\subsection{Concept Dynamics as Geometric Survivability}

Communication and influence are modeled as the survivability of concept vectors under
interpretation. A concept survives transmission when its transformed representation
retains sufficient magnitude and structural integrity. Failures of persuasion,
miscommunication, ideological mutation, or loss of meaning arise naturally from this
framework: they correspond to geometric degradation under repeated transformations.

This geometric notion of survivability parallels concerns in AI alignment, where an
agent's internal representation of a goal may diverge from the human's intended meaning
\cite{gabriel2020artificial}. My approach provides a formal
criterion for such divergences.

\subsection{Why Abstraction Is Necessary}

Traditional models of communication often presuppose shared semantic structure or
universal cognitive priors. But real systems—human or artificial—exhibit heterogeneous
representational schemes. The geometric formalism introduced here allows us to:

\begin{itemize}
    \item analyze compatibility between cognitive architectures,
    \item formalize the conditions under which concepts preserve meaning across agents,
    \item model failures of leadership or coordination arising from structural, not
          informational, barriers.
\end{itemize}

This conceptual foundation motivates the formal model developed in the following
sections, where value spaces, interpretive maps, null spaces, and concept survivability
are given precise mathematical form.

\subsection{Related Work}
\label{sec:RelatedWork}

This work builds upon and synthesizes several distinct research trajectories across artificial intelligence, cognitive science, and formal epistemology.

\paragraph{Conceptual Spaces and Geometric Cognition.}
My notion of value spaces extends the idea of \textit{conceptual spaces} as developed by Gärdenfors \cite{gardenfors2004conceptual}, which represent mental content using geometric structures. Whereas Gärdenfors' Geometry of Meaning~\cite{gardenfors2014geometry} focuses on single-agent conceptual semantics, my model introduces multi-agent value spaces and interpretive operators that explain disagreement, motivation, and influence. In deep learning, work by~\cite{geva2021transformer} demonstrates that Transformer models encode semantic information through vector-space structures, such as key–value memories and semantic patterns concentrated in upper layers, with subsequent work~\cite{pmlr-v235-park24c} formalizing how high-level concepts are represented as linearly composed feature directions, thereby reinforcing geometric interpretations of meaning. \cite{saxe2019mathematical} provide a mathematical analysis of how representations in deep linear networks evolve during training, showing that the learning dynamics follow predictable geometric trajectories. \cite{piantadosi2024concepts} extend this observation to cognitive representations more broadly, arguing that vector-based representations can capture the compositional and structural properties of human concepts, with linear operations supporting key relational and analogical computations.

\paragraph{Linearity as an Approximation for Evaluative Structures.}
The assumption of linearity is motivated by converging evidence across multiple domains. In distributional semantics, semantic relationships are well-captured by linear operations in embedding spaces: word analogies, for instance, correspond to vector arithmetic~\cite{mikolov2013linguistic}, and cross-lingual alignment can be achieved by learning linear maps between separately trained embedding spaces~\cite{conneau2017word, schonemann1966generalized}. However, the present framework applies linearity not only to semantic or perceptual content but also to evaluative and motivational structures—valuations, desires, and emotional appraisals. The justification here is that linearity serves as a principled first-order approximation: any smooth mapping between cognitive spaces can be locally approximated by its Jacobian (a linear map) in a neighborhood of the current operating point, much as Taylor expansion linearizes nonlinear functions. This first-order approximation captures the dominant effects of interpretive transformation—projection, distortion, and annihilation—while remaining analytically tractable. The framework's core results (null-space filtering, compositional reachability, leadership conditions) depend on the algebraic structure of linear maps; an informal piecewise-linear extension of the Local Coherence Theorem is provided in Appendix~\ref{apx:LocalCoherence}, and a fully nonlinear treatment remains a direction for future work (see Limitations).

\paragraph{Concept Modeling and Communication.}  
While Levesque's framework formalizes belief and agency in a logical, symbolic manner~\cite{levesque1984logic}, my model reformulates these ideas in geometric terms. In the realm of communication, \cite{clark1996using} emphasizes that successful interaction requires establishing common ground—a shared base of knowledge and context that speakers continuously coordinate and update. Similarly, \cite{tomasello2005shared} argue that human communication is fundamentally built on shared intentionality and collaborative joint action rather than mere information exchange through signals. My model provides a formal geometric interpretation of these insights: miscommunication occurs when a transmitted concept vector falls into the null space of the receiver's interpretation map, making genuine grounding impossible in structural terms. In the cognitive geometry developed here, shared intentions are represented as aligned basis vectors: dimensions jointly encoded across agents' value spaces.

Recent work by~\citet{wheeler2023semantic} applies conceptual spaces to semantic communication, modeling shared understanding in terms of alignment within structured semantic spaces. My framework shares this geometric orientation but differs in a key respect: whereas Wheeler et al.\ analyze alignment within a common conceptual space, the present model emphasizes communication \emph{across} distinct value spaces with potentially incompatible bases, and provides a null-space criterion that determines when alignment is structurally impossible rather than merely incomplete.

Moreover, formal results linking symbolic semantics to vector-space representations show that extensional logical semantics can be systematically reformulated in geometric terms while preserving compositional structure on the relevant fragment~\cite{quigley2025vector}.

\paragraph{Value Alignment and AI Interpretability.}  
My emphasis on value-preserving transformation relates directly to the literature on AI alignment and interpretability. \cite{gabriel2020artificial} situates value alignment within broader ethical frameworks, focusing on how AI systems should reflect human moral commitments, while \cite{russell2019human} approaches alignment through normative principles and empirical failure cases, focusing on how AI systems ought to reflect human values in practice. While these works focus on normative or empirical criteria for value alignment, my model aims at structural interpretability: explaining alignment and misalignment in terms of the geometric organization of conceptual representations.

\paragraph{Multi-Agent Influence and Leadership.}  
Classical models of influence in networks often rely on probabilistic diffusion (e.g., \cite{jackson2007diffusion}), whereas my model incorporates structural interpretability via null spaces. Classical diffusion models (e.g., \cite{kempe2003maximizing}) treat influence as probabilistic contagion, whereas my model introduces structural representational constraints absent in such frameworks. My approach shares their spirit but prioritizes epistemic compatibility over metric learning.

\paragraph{Ontology, Symbol Grounding, and Philosophical Framing.}  
My approach also speaks to long-standing debates on mental content and grounding~\cite{harnad1990symbol}, though via formal rather than semantic mechanisms. The notion of \textit{abstract beings} relates to Floridi's ontological treatment of informational objects~\cite{floridi2011information}. In Floridi's philosophy of information, informational objects are accessed through a \emph{Level of Abstraction}—a set of typed observables that determines what features of an entity are registered. Abstract beings in my framework play an analogous role: they are defined relative to an agent's basis, and their meaning is constituted by their position within a value space and their transformational behavior under interpretation maps, rather than by correspondence to external reality. The challenge of concept death connects to work on collective memory and cultural evolution: in my framework, collective memory is represented as the cognitive space spanned by all agents' conceptual structures, and abstract entities "die" when they fall into the null space of every agent's interpretive map.

\paragraph{Philosophical Foundations: Grounding, Understanding, and the Intentional Stance.} 
My treatment of concepts as abstract beings echoes longstanding ontological concerns in the philosophy of mind. Dennett's intentional stance \cite{dennett1987intentional} frames interpretation as a matter of predicting an agent's behavior through coherent attributions of belief and desire; Section~\ref{sec:InterpretationMapping} develops the formal connection between the intentional stance and interpretation maps in detail.

My notion of concept death aligns with Longino's emphasis on plurality and perspectival divergence \cite{longino1990science}: deep disagreement can reflect structural incompatibility of value spaces rather than epistemic failure.

\paragraph{Cognitive Structural Representation in Neuroscience.} 
Representational geometry provides a unifying framework for understanding cognitive and neural representations, as articulated by~\cite{kriegeskorte2013representational}. Rather than encoding information in individual neurons, this framework emphasizes the geometric structure of population-level response patterns, a principle supported by a broad range of empirical findings surveyed in their work.

In a complementary theoretical vein, \cite{wei2025representational} show that systematic behavioral error patterns can be explained by geometric constraints on internal representations under information-processing limits, highlighting the explanatory role of representational geometry at the cognitive level.

More directly, \cite{greco2024predictive} provide empirical evidence that learning and prediction reorganize the geometry of neural population activity, demonstrating that task-relevant information is encoded in population-level response structures rather than in isolated neurons.

Building on this foundation, the present model extends representational geometry beyond perceptual coding to the domain of evaluative concepts and value, treating agents' internal states as vectors in personalized value spaces and modeling communication as linear transformations between these spaces.

\medskip

Nevertheless, none of these frameworks provide a \textit{structural criterion for the survivability of concepts under heterogeneous cognitive architectures}. Conceptual-space models characterize semantic similarity and grounding geometrically, but typically presuppose shared or comparable dimensions and do not specify when a concept becomes structurally unintelligible to another agent. Representational-geometry approaches in neuroscience explain how information is encoded and transformed within population-level activity, yet they remain largely confined to perceptual or task-specific representations and do not address conceptual transmission or motivational influence across agents with distinct evaluative bases. Logical and symbolic models of belief, in turn, formalize inferential relations but abstract away from the representational transformations that occur during communication.

The present framework integrates these strands by introducing a \textit{null-space criterion of concept survivability}: a concept can influence another agent if and only if its representation avoids annihilation under the relevant interpretation maps. This yields a mathematically explicit, geometrically interpretable, and cognitively grounded account of conceptual transmission, miscommunication, and motivational alignment across heterogeneous agents.

\paragraph{What are Abstract Beings?}
I define \emph{abstract beings} as internal constructs that represent evaluative, value-oriented entities within an agent's cognitive space. These may correspond to ideologies, goals, narratives, identities, or hypothetical propositions—anything that carries evaluative weight and can exert motivational force. Unlike external objects or percepts, abstract beings have no necessary grounding in physical reality. They are geometrically represented as vectors in an agent's value space. An abstract being functions as a \emph{belief} when it is endorsed by an agent.

\emph{Intuitively}, one may think of an abstract being like a ``constellation in the mind''—not a real star cluster, but a meaningful pattern one perceives, follows, and even fights for. A political ideal (e.g., ``freedom'') or a personal identity (e.g., ``I am a protector'') operates as such a being. These entities persist, evolve, and die, depending on how they are represented, interpreted, and propagated across agents.
\section{Theory}

\subsection{Value Spaces and Cognitive Motivation}

I begin by defining the internal structure of an agent as a personalized \textit{value space}, a vector space that encodes what the agent considers meaningful or valuable. The basis vectors of this space represent the agent’s fundamental evaluative dimensions.

Each agent also possesses a \textit{valuation function}, which assigns significance to any concept vector within its space. This allows agents not only to hold concepts, but to evaluate them.

Motivation is then formalized as a vector pointing from the agent’s current state to a goal state. This motivational gradient determines the direction in which an agent is inclined to act. As such, cognition and desire are unified within a single geometric framework. In this language of linear algebra, I can show how two people have different perspectives on the same goal or other things. See Appendix~\ref{apx:ValMot} for the formalization.

\subsubsection{Example: Divergent Valuation via Perspective Transformation}

Consider two individuals evaluating an abstract goal \( X \), such as publishing a philosophical book. Let the shared value space have three dimensions: personal fulfillment (basis vector \( b_1 \)), social recognition (\( b_2 \)), and financial reward (\( b_3 \)). In Person A’s coordinate system, the value vector of \( X \) is:

\[
\mathbf{x}_A =
\begin{bmatrix}
0.9 \\ 0.6 \\ 0.2
\end{bmatrix}
\]

This indicates high fulfillment, moderate recognition, and low monetary return. Person A’s value basis is assumed to be identity.

Person B, however, values financial reward more heavily and fulfillment less so. Their perspective is modeled by the linear transformation:

\[
T_{A \rightarrow B} =
\begin{bmatrix}
0.3 & 0 & 0 \\
0 & 0.5 & 0 \\
0 & 0 & 1.2
\end{bmatrix}
\]

Applying this yields:

\[
\mathbf{x}_B = T_{A \rightarrow B} \cdot \mathbf{x}_A =
\begin{bmatrix}
0.27 \\ 0.3 \\ 0.24
\end{bmatrix}
\]

To simplify things, let the valuation function be a simple sum of vector components:

\[
\mathrm{Val}_A(\mathbf{x}_A) = 1.7, \quad \mathrm{Val}_B(\mathbf{x}_B) = 0.81.
\]

Although both assess the same goal, Person B values it less than half as much due to their differing internal value system. This illustrates how personal perspective—modeled as a linear transformation—alters perceived value. 

\subsubsection{Example: Motivation as Gradient Between States}

Let an agent's value space be defined by three dimensions: knowledge (\( b_1 \)), reputation (\( b_2 \)), and income (\( b_3 \)).

Suppose the agent's \textbf{current state} is:

\[
x_i =
\begin{bmatrix}
0.4 \\ 0.3 \\ 0.2
\end{bmatrix}
\quad \text{and their desired \textbf{goal state} is} \quad
g_i =
\begin{bmatrix}
0.9 \\ 0.6 \\ 0.4
\end{bmatrix}
\]

The \textbf{motivational gradient} (Appendix~\ref{apx:ValMot}) is computed as:

\[
\bm{M}_i = g_i - x_i =
\begin{bmatrix}
0.5 \\ 0.3 \\ 0.2
\end{bmatrix}
\]

This vector \( \bm{M}_i \) encodes the direction in which the agent is internally inclined to act. Its components suggest the strongest drive is toward acquiring knowledge, followed by reputation and income.

In this framework, \textit{cognition} is the agent's awareness of their position and goal in the space, while \textit{desire} is expressed as the motivational gradient that links them. Together, they define purposeful motion in value space.


\vspace{0.5em}
\subsection{Concepts as Abstract Beings}

In my framework, concepts are not merely scalar propositions but structured entities I call \textit{abstract beings}. These are vectors residing within an agent’s value space.

An abstract being “exists” for an agent if it carries nonzero weight under that agent’s valuation. If no agent in a population retains a meaningful version of a being, I say the being has "died". This enables us to formally model the birth, evolution, and extinction of conceptual or ideological entities. See Appendix~\ref{apx:AbsBe} for the formalization.

\paragraph{Trans-Agent Identity of Abstract Beings.}
A natural question is when two structurally similar but quantitatively divergent representations $X_A \in \mathcal{V}_A$ and $X_B \in \mathcal{V}_B$ count as the \emph{same} abstract being, as opposed to distinct beings that merely resemble one another. The framework answers this geometrically rather than metaphysically: $X_A$ and $X_B$ are instantiations of the same abstract being to the extent that there exist interpretation maps $T_{A\to B}$ and $T_{B\to A}$ that satisfy the local coherence condition (Appendix~\ref{apx:LocalCoherence}) on a concept subspace $\mathcal{U}\subseteq\mathcal{V}_A$ containing $X_A$, and that transport $X_A$ and $X_B$ into approximate images of one another: $T_{A\to B}(X_A)\approx X_B$ and $T_{B\to A}(X_B)\approx X_A$ within the resulting round-trip distortion $\varepsilon$. This makes trans-agent identity \emph{graded} (governed by $\varepsilon$), \emph{asymmetric} (forward and backward distortion need not coincide), and grounded in the algebraic structure of the maps rather than in any external criterion of sameness. The lifecycle account that follows---birth, evolution, and death of abstract beings (Appendix~\ref{apx:abstractBeing})---inherits this graded identity: an abstract being persists across agents not by carrying a fixed essence, but by remaining recoverable under the interpretive geometry that connects them.

\subsubsection{Example: Alignment of Concepts}

In this framework, a concept is modeled as a fixed vector \( \mathbf{b} \) in value space — an abstract entity that "pulls" agents toward certain directions.

Suppose concept \( \mathbf{b} \) represents the idea: \emph{“Service to others brings fulfillment”}, encoded as:

\[
\mathbf{b} =
\begin{bmatrix}
0.6 \\ 0.8 \\ 0.0
\end{bmatrix}
\quad \text{(high in social and personal value, low in monetary gain)}
\]

An agent whose motivational gradient \( \bm{M}_i = g_i - x_i \) is:

\[
\bm{M}_i =
\begin{bmatrix}
0.5 \\ 0.5 \\ 0.1
\end{bmatrix}
\]

will find this concept partially aligned. The degree of alignment can be computed via the dot product:

\[
\mathbf{b} \cdot \bm{M}_i = 0.6 \cdot 0.5 + 0.8 \cdot 0.5 + 0 \cdot 0.1 = 0.7
\]

The higher the alignment, the more the concept resonates with the agent. Thus, concepts function as directional attractors in value space—abstract beings that influence and organize desire.

\vspace{0.5em}
\subsection{Interpretation Maps and Epistemic Compatibility}

Communication between agents is modeled via \emph{interpretation maps}:
linear transformations between value spaces. For agent $A$ to influence
agent $B$, the abstract being must survive this transformation in a way that
preserves both representational structure and motivational significance.

Successful influence requires three forms of consistency:
\begin{itemize}
  \item \textbf{Forward Consistency}: the transmitted being retains its
        representational structure (up to small distortion) under
        $T_{A\to B}$.
  \item \textbf{Backward Consistency}: $B$'s internal representation can be
        approximately mapped back to $A$'s original, enabling mutual
        intelligibility.
  \item \textbf{Valuation Consistency}: the concept retains or gains
        subjective value in the recipient’s space, preventing motivational
        collapse.
\end{itemize}

These conditions model influence not as a passive transfer of information,
but as a negotiated transformation of meaning across heterogeneous cognitive
geometries. See Appendix~\ref{apx:InMap} for the formalization.

\subsubsection{Admissibility of Interpretation Maps: The Shared-Basis Hypothesis}
\label{sec:Admissibility}

The three consistency conditions characterize \emph{successful} influence but leave open a more fundamental question: under what empirical regularities are interpretation maps approximately invertible on the concepts that matter? Stated as bare mathematics, $T_{A\to B}$ could be any linear map---including maps whose null space annihilates every concept either agent cares about. If the framework's predictions are to engage with observable communication, some structural account of when $T$ is well-behaved is needed.

The hypothesis adopted here is that interpretation maps are constrained by \emph{shared task bases}. Agents who participate in a common occupational, disciplinary, or social practice acquire, through repeated joint activity, the same basis vectors needed to perform the practice's work. Two engineers reasoning about load-bearing structures, two clinicians reasoning about patient management, or two negotiators reasoning about an agreement do not merely happen to agree---they share the dimensions along which their representations of the joint task are organized. On this shared task subspace $\mathcal{U} \subseteq \mathcal{V}_A$, the round-trip operator $T_{B\to A} \circ T_{A\to B}$ satisfies the local coherence condition (Appendix~\ref{apx:LocalCoherence}) for small $\varepsilon$: paired concepts survive cross-agent transmission with bounded distortion.

Null spaces become consequential precisely at the boundaries of shared bases---in cross-occupational, cross-disciplinary, or cross-cultural exchanges. An engineer's reasoning about algorithmic complexity, a philosopher's reasoning about transcendental idealism, or a clinician's reasoning about autoimmune etiology each invokes basis vectors that have no neighbors in the receiver's task subspace. The framework predicts that such concepts will be largely annihilated under cross-domain interpretation---not because the receiver is irrational or under-informed, but because the basis required to recover them is absent.

This view aligns with existing accounts of communicative grounding. Clark's notion of \emph{common ground}~\cite{clark1996using} captures the shared base that allows interlocutors to coordinate without re-explanation; the present framework gives that notion a geometric form, identifying common ground with the shared task subspace on which $T$ is near-invertible. An empirical illustration of this pattern, using sentence-embedding spaces to estimate $T$ between philosophical and engineering vocabularies, is provided in Section~\ref{sec:EmpiricalIllustration}. A full treatment of field-level basis-sharing dynamics---how shared bases form, fracture, and are renegotiated across communities---is the subject of a planned extension to epistemic flow across disciplines.

This shared-basis grounding also bears on a deeper question about the framework's commitment to linearity. A reader sympathetic to constructivist or pragmatist semantics might worry that imposing globally linear interpretation maps already presupposes the representational stability the framework purports to explain. The Admissibility hypothesis addresses this concern directly: bases are not treated as metaphysically pre-given but as acquired through repeated participation in shared practice. Linearity is then the local algebraic structure of this practice-acquired representational scaffolding rather than a prior commitment to fixed mental furniture. On this reading, the framework's geometric apparatus is compatible with constructivist and pragmatist positions, which similarly locate representational stability in practice rather than in any pre-given mental structure.

For simplicity, $\|\cdot\|$ denotes a vector norm used to approximate
valuation magnitudes; when necessary, $\|\cdot\|_i$ refers to the specific
norm induced by agent $i$'s valuation.

\begin{remark}
Throughout the examples and in the formal results of the appendix (Theorems~\ref{thm:MotiConv} and~\ref{thm:ConvHull}), the valuation function $\mathrm{Val}_i$ is taken to be a norm or a component sum. The theorems as stated depend on this simplifying choice; extending them to general valuation functions is a direction for future work.
\end{remark}

In the examples below, $T_A$ and $T_B$
map from a shared conceptual space into the value spaces of agents $A$ and
$B$, while $T_{i\to j}$ denotes maps between agents' value spaces.

\subsubsection{Example: Epistemic Compatibility}

Interpretation maps determine how abstract concepts are projected into an
agent's value space. Suppose the concept $C$—e.g., \emph{Freedom}—is
represented in a two-dimensional conceptual space as
\[
\mathbf{c} =
\begin{bmatrix}
1.0 \\ 0.5
\end{bmatrix}
\quad \text{(autonomy, opportunity)}.
\]

Agent $A$ interprets $C$ through the linear map
\[
T_A =
\begin{bmatrix}
1 & 0 \\
0 & 0.5 \\
0 & 0
\end{bmatrix}
\quad \text{(independence, achievement, profit)},
\]
yielding
\[
T_A(\mathbf{c}) =
\begin{bmatrix}
1.0 \\ 0.25 \\ 0
\end{bmatrix}.
\]

Agent $B$ interprets $C$ through
\[
T_B =
\begin{bmatrix}
0.5 & 1 \\
0 & 0 \\
0.2 & 0.1
\end{bmatrix}
\quad \text{(reads $C$ as independence with a small profit component; achievement is annihilated)},
\]
yielding
\[
T_B(\mathbf{c}) =
\begin{bmatrix}
1.0 \\ 0 \\ 0.25
\end{bmatrix}.
\]

Both agents regard $C$ as valuable, but they project it into \emph{partially
overlapping yet differently weighted subspaces}. Although a shared component
(profit) remains, the differing emphasis on autonomy or opportunity may
produce systematic misunderstanding. \emph{Epistemic compatibility} is thus
defined as the degree to which interpretation maps preserve nonzero
projections along similar directions.

\subsubsection{Example: Conditions for Successful Influence}

Consider the concept $B$: \emph{``Open-source education empowers society’’},
represented as
\[
\mathbf{b} =
\begin{bmatrix}
1.0 \\ 0.6
\end{bmatrix}
\quad \text{(freedom, collective benefit)}.
\]

\paragraph{Forward Consistency.}
Agent $A$ interprets the concept through
\[
T_A =
\begin{bmatrix}
1 & 0.5 \\
0 & 0.8 \\
0 & 0
\end{bmatrix}
\Rightarrow
T_A(\mathbf{b}) =
\begin{bmatrix}
1.3 \\ 0.48 \\ 0
\end{bmatrix}.
\]

Agent $B$ interprets it through
\[
T_B =
\begin{bmatrix}
0.6 & 0.9 \\
0.1 & 0.1 \\
0.2 & 0
\end{bmatrix}
\Rightarrow
T_B(\mathbf{b}) =
\begin{bmatrix}
1.14 \\ 0.16 \\ 0.2
\end{bmatrix}.
\]

Both images preserve similar \emph{rank-order emphasis} across dimensions: a
dominant first component, a moderate second, and a small third. This
approximate preservation of structural relations constitutes forward
consistency.

\paragraph{Backward Consistency.}
Agent $A$ attempts to recover its own representation from $B$'s
interpretation using a map $T_{B\to A}$:
\[
T_{B\to A}\big(T_B(\mathbf{b})\big) \approx T_A(\mathbf{b}).
\]
Here $T_{B\to A}$ denotes any reconstruction map from $B$'s value space
back to $A$'s; exact invertibility is not required. Successful
approximation indicates that $A$ can recover a meaningful version of
its own representation from $B$'s internal encoding, ensuring mutual
intelligibility.

\paragraph{Valuation Consistency.}
Each agent evaluates the concept within their respective value spaces:
\[
\mathrm{Val}_A(\mathbf{b}) = \|T_A(\mathbf{b})\| \approx 1.38, \qquad
\mathrm{Val}_B(\mathbf{b}) = \|T_B(\mathbf{b})\| \approx 1.17.
\]
Although the images differ structurally, their magnitudes remain close, and
the directional similarity (e.g., cosine similarity near~1) ensures that the
motivational content of the concept is largely preserved.

\medskip

Together, these three consistencies ensure that concepts are not merely
transmitted but \emph{intelligible, reconstructible, and motivationally
resonant} across agents. This provides a geometric criterion for genuine
influence and shared understanding.

\vspace{0.5em}
\subsection{The Geometry of Mutual Understanding}

If two agents can consistently map concepts back and forth, they achieve what I call \textit{local coherence}. This enables stable communication, even if their internal representations differ. This property is made precise in Appendix~\ref{apx:LocalCoherence}, where I show that if each agent can approximately reconstruct the other’s representation, then repeated communication preserves concept stability (Theorem~\ref{thm:LocalCoherence}).

Philosophically, this expresses a form of representational intersubjectivity—agents may differ cognitively but still maintain stable mutual understanding. 

\subsubsection{Example: The Geometry of Mutual Understanding}

Mutual understanding arises when two agents’ interpretation maps project shared concepts into partially aligned directions in value space.

Let abstract object \( X \) represent a goal like \emph{“Starting a nonprofit tech school”}, encoded in conceptual space as:

\[
\mathbf{x} =
\begin{bmatrix}
0.7 \\ 0.8
\end{bmatrix}
\quad \text{(education, social impact)}
\]

Agent A’s interpretation map \( T_A \) projects into their value space:

\[
T_A =
\begin{bmatrix}
1 & 0.5 \\
0 & 0.6 \\
0 & 0
\end{bmatrix}
\quad \text{(values: autonomy, recognition, wealth)}
\]

\[
T_A(\mathbf{x}) =
\begin{bmatrix}
1.1 \\ 0.48 \\ 0
\end{bmatrix}
\]

Agent B interprets the same \( X \) via:

\[
T_B =
\begin{bmatrix}
0.4 & 1 \\
0.2 & 0 \\
0.1 & 0.1
\end{bmatrix}
\]

\[
T_B(\mathbf{x}) =
\begin{bmatrix}
1.08 \\ 0.14 \\ 0.15
\end{bmatrix}
\]

Though their internal bases differ, both agents perceive strong value in \( X \), with significant projection along socially meaningful dimensions. The dot product of their interpreted vectors shows high directional similarity:

\[
\cos \theta = \frac{T_A(\mathbf{x}) \cdot T_B(\mathbf{x})}{\|T_A(\mathbf{x})\|\|T_B(\mathbf{x})\|} \approx 0.96
\]

This near-alignment indicates high epistemic compatibility and \textit{mutual resonance}. Thus, mutual understanding can be modeled as \textit{angular proximity} between transformed conceptual vectors.

\vspace{0.5em}
\subsection{Cognitive Blindness and Null Spaces}

Each interpretation map has a \textit{null space}—a subspace of concepts that the recipient agent cannot perceive or evaluate. This provides a powerful model of \textit{cognitive blindness}: the idea that some concepts are structurally invisible to others, not due to ignorance or bias, but because they lie outside the range of interpretive capacity.

This insight explains why even well-intended communication can fail entirely, and it provides a formal account of ideological opacity and interpretive misfire. See Appendix~\ref{apx:NullCog} for the formalization.

\subsubsection{Example: Perceptual Omission}

Consider a goal \( X \): creating non-commercial art, represented in Person A’s space as:

\[
\mathbf{x}_A =
\begin{bmatrix}
1.0 \\ 0.5 \\ 0.0
\end{bmatrix}
\]

with dimensions: creativity (\( b_1 \)), social utility (\( b_2 \)), and financial reward (\( b_3 \)).

Person B’s perspective transformation is:

\[
T_{A \rightarrow B} =
\begin{bmatrix}
0 & 0 & 0 \\
0 & 1 & 0 \\
0 & 0 & 1
\end{bmatrix}
\]

This matrix removes sensitivity to creative value (\( b_1 \)). Applying the transformation:

\[
\mathbf{x}_B = T_{A \rightarrow B} \cdot \mathbf{x}_A =
\begin{bmatrix}
0 \\ 0.5 \\ 0
\end{bmatrix}
\]

Now consider a purely creative act:

\[
\mathbf{z}_A = 
\begin{bmatrix}
1 \\ 0 \\ 0
\end{bmatrix}
\quad \Rightarrow \quad
T_{A \rightarrow B} \mathbf{z}_A = \mathbf{0}
\]

To Person B, this project has \emph{zero perceived value}. Although meaningful in A’s space, it lies in the null space of B’s perspective—hence, cognitively invisible.

This formalizes how deep value misalignment can lead not just to disagreement, but to total perceptual omission.

\vspace{0.5em}
\subsection{Networks of Influence}

I generalize my model to multi-agent networks, where influence occurs over time and along paths in a graph. If an abstract being can travel along a connected sequence of interpretive links—and survives the cognitive filters at each stage—it eventually reaches distant agents.

This contrasts with diffusion models that focus solely on probabilistic spread. Here, influence is constrained not just by social topology, but by the composition of interpretive maps and null spaces along the path. More details can be found in Appendix~\ref{apx:NetInfl}.

\subsubsection{Example: Propagation in a Network of Interpretive Maps}

Let an abstract concept \( \mathbf{b} \in \mathbb{R}^2 \), such as \emph{“Decentralized knowledge is more resilient”}, originate with agent \( A_1 \), represented as:

\[
\mathbf{b} =
\begin{bmatrix}
1.0 \\ 0.7
\end{bmatrix}
\quad \text{(freedom, epistemic trust)}
\]

This concept travels through a directed network of agents \( A_1 \rightarrow A_2 \rightarrow A_3 \), each with a linear interpretation map \( T_i \). To make it simple, let all agents can influence others with probability 1. The concept propagates only if it survives all transformations:

\begin{align*}
\mathbf{b}_2 &= T_2 \cdot T_1(\mathbf{b}) \\
\mathbf{b}_3 &= T_3 \cdot \mathbf{b}_2
\end{align*}

Suppose:
\[
T_1 =
\begin{bmatrix}
1 & 0 \\
0.2 & 0.8
\end{bmatrix}, \quad
T_2 =
\begin{bmatrix}
1 & 0 \\
0 & 0.5
\end{bmatrix}, \quad
T_3 =
\begin{bmatrix}
0 & 1 \\
0 & 0
\end{bmatrix}
\]

Then:
\[
T_1(\mathbf{b}) =
\begin{bmatrix}
1.0 \\
0.2(1.0) + 0.8(0.7) = 0.2 + 0.56 = 0.76
\end{bmatrix}
\]

\[
\mathbf{b}_2 = T_2 \cdot
\begin{bmatrix}
1.0 \\ 0.76
\end{bmatrix}
=
\begin{bmatrix}
1.0 \\ 0.38
\end{bmatrix}, \quad
\mathbf{b}_3 = T_3 \cdot
\begin{bmatrix}
1.0 \\ 0.38
\end{bmatrix}
=
\begin{bmatrix}
0.38 \\ 0
\end{bmatrix}
\]

Here, the concept's second component is lost at \( A_3 \) due to a cognitive filter (null row), meaning it survives only partially.

This demonstrates that influence is not purely probabilistic—as in classical diffusion models—but geometrically constrained by the structure and composition of interpretive maps. Only concepts aligned with the non-null subspace of each agent's transformation can propagate successfully through the network.

\vspace{0.5em}
\subsection{No-Null-Space Leadership and Geometric Authority}

I introduce the concept of \textit{No-null-space leadership}: a leader can influence an agent only if their concept survives all the interpretive transformations along the path from leader to follower.

This leads to a formal criterion:
\begin{quote}
A leader $L$ leads agent $i$ with respect to concept $X$ if and only if the \textit{composite interpretation map} (see Appendix~\ref{apx:CompNullFilters}) from $L$ to $i$ does not annihilate $X$.
\end{quote}

Importantly, leadership is no longer about hierarchy or charisma—it is a structural property of cognitive compatibility. More details for No-Null-Space Leadership Condition are at Appendix~\ref{apx:noNullLeaderCond}.

\subsubsection{Example: Geometric Authority of Concept Spreading}

Let a leader \( L \) attempt to transmit a concept \( \mathbf{x} \in \mathbb{R}^3 \), such as:

\[
\mathbf{x} =
\begin{bmatrix}
0.8 \\ 0.6 \\ 0.4
\end{bmatrix}
\quad \text{(autonomy, fairness, collaboration)}
\]

The concept must pass through agents \( A_1 \rightarrow A_2 \rightarrow A_3 \), each applying their interpretive map \( T_i \). The leader’s influence succeeds only if the composite transformation does not send \( \mathbf{x} \) to the zero vector:

\[
T_{\text{net}} = T_3 \cdot T_2 \cdot T_1
\quad \text{such that} \quad T_{\text{net}}(\mathbf{x}) \neq \mathbf{0}
\]

Suppose the intermediate transformations are:

\[
T_1 =
\begin{bmatrix}
1 & 0 & 0 \\
0 & 1 & 0 \\
0 & 0 & 1
\end{bmatrix}, \quad
T_2 =
\begin{bmatrix}
1 & 0 & 0 \\
0 & 1 & 0 \\
0 & 0 & 0
\end{bmatrix}, \quad
T_3 =
\begin{bmatrix}
0 & 1 & 0 \\
0 & 0 & 0 \\
0 & 0 & 0
\end{bmatrix}
\]

Apply stepwise:

\[
T_2 \cdot T_1(\mathbf{x}) = 
\begin{bmatrix}
0.8 \\ 0.6 \\ 0
\end{bmatrix}, \quad
T_3 \cdot
\begin{bmatrix}
0.8 \\ 0.6 \\ 0
\end{bmatrix} =
\begin{bmatrix}
0.6 \\ 0 \\ 0
\end{bmatrix}
\]

Since the final result is non-zero, the concept survives the path, and I say:

\[
L \leadsto A_3 \quad \text{with respect to } \mathbf{x}
\]

Now suppose \( \mathbf{x}' = [0, 0, 1]^\top \). This vector is annihilated at \( T_2 \), so:

\[
T_{\text{net}}(\mathbf{x}') = \mathbf{0}
\quad \Rightarrow \quad
L \nrightarrow A_3 \text{ (with respect to } \mathbf{x}' \text{)}
\]

\textbf{Conclusion:} Leadership is no longer defined by position or authority, but by the structural survivability of concepts through interpretive filters. A leader is one whose concepts remain visible throughout the network: their authority is \emph{geometric}, not symbolic.

\vspace{0.5em}
\subsection{Leadership Emergence and Its Boundaries}

Leadership is operationalized as the successful transmission of a concept and its valuation. An agent is led when it adopts a concept from another, not only structurally but motivationally. Using the no-null-space leadership condition, I determine exactly which agents a leader can influence.

This highlights the limitations of leadership in epistemically diverse populations. Some agents are unreachable—not by social distance, but by incompatibility in value representation.
The rigorous details of Leader Emergence can be found at Appendix~\ref{apx:leaderEmerg}.

\subsubsection{Example: Leadership as Transmission of Structure and Value}

Let a leader \( L \) hold a concept \( \mathbf{x} \in \mathbb{R}^3 \), such as:

\[
\mathbf{x} =
\begin{bmatrix}
0.9 \\ 0.7 \\ 0.3
\end{bmatrix}
\quad \text{(justice, autonomy, empathy)}
\]

Suppose this concept must propagate to two agents: \( A_1 \) and \( A_2 \). Each agent applies a different interpretive map:

\[
T_1 =
\begin{bmatrix}
1 & 0 & 0 \\
0 & 0.8 & 0 \\
0 & 0 & 0.6
\end{bmatrix}, \quad
T_2 =
\begin{bmatrix}
0 & 0 & 0 \\
0 & 0 & 0 \\
0 & 0 & 1
\end{bmatrix}
\]

\textbf{Agent \( A_1 \)} receives:

\[
T_1(\mathbf{x}) =
\begin{bmatrix}
0.9 \\ 0.56 \\ 0.18
\end{bmatrix}, \quad \mathrm{Val}_{A_1}(\mathbf{x}) = \|T_1(\mathbf{x})\| \approx 1.08.
\]

The concept retains structure and motivation. \( A_1 \) is successfully led by \( L \).

\textbf{Agent \( A_2 \)} receives:

\[
T_2(\mathbf{x}) =
\begin{bmatrix}
0 \\ 0 \\ 0.3
\end{bmatrix}, \quad \mathrm{Val}_{A_2}(\mathbf{x}) = \|T_2(\mathbf{x})\| \approx 0.3.
\]

Most of the concept is annihilated due to \( T_2 \)’s narrow subspace. \( A_2 \) does not receive the core message nor enough motivation. Thus, \( L \) fails to lead \( A_2 \) with respect to \( \mathbf{x} \).

\textbf{Conclusion:} Leadership requires that the concept not fall into the null space of the follower’s interpretation—and that its projected value remains motivational. Some agents, like \( A_2 \), are not leadable by structural incompatibility, not distance. This reveals the geometric limits of leadership in epistemically diverse populations.

\vspace{0.5em}
\subsection{Motivational Alignment}

When a follower adopts a concept, it shifts their goal state, altering their motivational gradient. Over time, this causes directional alignment within a network, as agents begin to act in ways consistent with the leader’s representation of reality.

This offers a formal account of how concept adoption leads to action, and how leadership can realign goals through cognitive transmission.

\subsubsection{Example: Motivational Alignment through Concept Adoption}

Motivational alignment occurs when a transmitted concept alters an agent’s internal goal vector, changing the direction of their effort. This directional shift reflects not just agreement, but realigned behavior.

Let leader \( L \) hold concept \( \mathbf{x} \in \mathbb{R}^3 \), representing:

\[
\mathbf{x} =
\begin{bmatrix}
0.9 \\ 0.7 \\ 0.4
\end{bmatrix}
\quad \text{(service, innovation, reputation)}
\]

Let follower \( A \)’s initial state and goal be:

\[
x_A =
\begin{bmatrix}
0.4 \\ 0.2 \\ 0.5
\end{bmatrix}, \quad
g_A^{(0)} =
\begin{bmatrix}
0.5 \\ 0.3 \\ 0.6
\end{bmatrix}
\]

Then the initial motivational gradient is:

\[
\bm{M}_A^{(0)} = g_A^{(0)} - x_A =
\begin{bmatrix}
0.1 \\ 0.1 \\ 0.1
\end{bmatrix}
\]

After adopting \( \mathbf{x} \), given $\alpha$ is an influence factor that $L$ can influence $A$'s goal, follower \( A \) shifts their goal to:

\[
g_A^{(1)} = \alpha \cdot \mathbf{x} + (1 - \alpha) \cdot g_A^{(0)}, \quad \alpha = 0.6
\]

\[
\Rightarrow g_A^{(1)} = 
0.6 \cdot 
\begin{bmatrix}
0.9 \\ 0.7 \\ 0.4
\end{bmatrix}
+ 0.4 \cdot
\begin{bmatrix}
0.5 \\ 0.3 \\ 0.6
\end{bmatrix}
=
\begin{bmatrix}
0.74 \\ 0.54 \\ 0.48
\end{bmatrix}
\]

New motivational gradient:

\[
\bm{M}_A^{(1)} = g_A^{(1)} - x_A =
\begin{bmatrix}
0.34 \\ 0.34 \\ -0.02
\end{bmatrix}
\]

This shows a shift in direction: the agent now moves toward service and innovation, slightly away from reputation. The angle between \( \bm{M}_A^{(1)} \) and the leader’s concept vector is small:

\[
\cos \theta = \frac{\bm{M}_A^{(1)} \cdot \mathbf{x}}{\|\bm{M}_A^{(1)}\| \cdot \|\mathbf{x}\|} \approx 0.92
\]

\textbf{Conclusion:} The agent's adopted concept reshapes their goals and reorients their behavior. Leadership thus operates not by command, but by transmitting directional attractors that gradually realign agents' motivational dynamics.

\vspace{0.5em}
\subsection{Coordination as Distributed Alignment}

Coordination is a collective behavior that occurs in a group. A leader can be consider as an initiator of this collective behavior that everyone follows~\cite{10.1145/3201406}. Coordination occurs when a concept \( \mathbf{x} \) originating from a leader \( L \) is successfully and consistently interpreted across all agents in a group—both structurally and motivationally.

Let \( \mathbf{x}_L \in \mathbb{R}^n \) represent the leader’s concept vector. Each agent \( A_i \) interprets it via their transformation \( T_i \), producing:

\[
\mathbf{x}_i = T_i(\mathbf{x}_L)
\]

I define two conditions for coordination:

\begin{enumerate}
  \item \textbf{Structural Consistency:} Each agent’s interpreted concept must approximate the leader’s original, up to a small tolerance \( \epsilon \):

  \[
  \| \mathbf{x}_i - \mathbf{x}_L \| < \epsilon, \quad \forall i
  \]

  \item \textbf{Valuation Consistency:} The subjective value of the concept (measured by vector norm, valuation function $\mathrm{Val}(\cdot)$ or motivational projection, but I use the norm for simplicity) must be similar to the leader’s valuation, up to a small bound \( \delta \):

  \[
  | \|\mathbf{x}_i\| - \|\mathbf{x}_L\| | < \delta, \quad \forall i
  \]
\end{enumerate}

\subsubsection{Example: Coordination}

Let the leader's concept be:

\[
\mathbf{x}_L =
\begin{bmatrix}
0.9 \\ 0.6 \\ 0.3
\end{bmatrix}, \quad \|\mathbf{x}_L\| \approx 1.12
\]

Suppose three agents apply:

\[
T_1 = I, \quad
T_2 =
\begin{bmatrix}
0.95 & 0 & 0 \\
0 & 1.05 & 0 \\
0 & 0 & 1.0
\end{bmatrix}, \quad
T_3 =
\begin{bmatrix}
1.0 & 0.05 & 0 \\
-0.02 & 0.98 & 0 \\
0 & 0 & 1.01
\end{bmatrix}
\]

Each produces \( \mathbf{x}_i \) such that:

\[
\|\mathbf{x}_i - \mathbf{x}_L\| < \epsilon = 0.1, \quad
|\|\mathbf{x}_i\| - \|\mathbf{x}_L\|| < \delta = 0.05
\]

Thus, all agents interpret the concept similarly and value it nearly equally. Coordination is achieved: the group shares a structurally consistent and motivationally aligned representation of the leader’s concept. In this geometric framework, coordination is a condition of distributed alignment. It emerges not from consensus or conformity, but from bounded transformation and shared valuation. The rigorous details of motivational convergence, which is a form of coordination can be found at Appendix~\ref{apx:MotiConv}.

\vspace{0.5em}
\subsection{Concept Alignment}

Concept alignment occurs when an agent not only accepts a concept structurally, but also adjusts its internal value system so that the concept is valued similarly to how the leader values it. This deeper form of alignment requires not just cognitive agreement, but motivational resonance.

Let \( \mathbf{x} \in \mathbb{R}^n \) represent an abstract being—a concept or principle—that leader \( L \) transmits. Let \( T_A \) be agent A’s original interpretive matrix, and \( \|T_A(\mathbf{x})\| \) the subjective value A assigns to the belief.

If agent A adopts the concept but initially perceives its value differently than the leader:

\[
\|\mathbf{x}_L\| = 1.2, \quad \|T_A(\mathbf{x})\| = 0.6
\]

then A must adjust its internal basis to match the leader’s valuation. This adjustment is achieved by applying a transformation matrix \( M_{L \rightarrow A} \), called the \textbf{Persuasion Matrix}, yielding:

\[
T_A' = M_{L \rightarrow A} \cdot T_A
\quad \text{such that} \quad
\|T_A'(\mathbf{x})\| \approx \|\mathbf{x}_L\|
\]

The Persuasion Matrix encodes the leader’s cognitive influence: it represents A’s internal restructuring of priorities in response to $L$’s effort to convince, teach, or re-frame. The result is not just structural understanding, but a shift in value alignment.

\paragraph{Where do Persuasion Matrices come from?}
A natural question is how persuasion matrices are generated in practice. Three mechanisms are plausible. First, a persuasion matrix can be \emph{learned from shared interaction data}. If agents $L$ and $A$ communicate over a shared corpus of concept pairs---cases where both assign meaning to the same entity---then the matrix $M_{L \to A}$ can be estimated by solving a least-squares alignment problem analogous to cross-lingual embedding alignment~\cite{conneau2017word}. Formally, given paired representations $\{(T_A(x_k), x_k^L)\}_{k=1}^K$, one solves
\[
M_{L \to A} = \arg\min_{M} \sum_{k=1}^{K} \|M \cdot T_A(x_k) - x_k^L\|^2,
\]
which reduces to the classical Procrustes problem~\cite{schonemann1966generalized} when $M$ is constrained to be orthogonal. Second, the matrix can emerge through \emph{iterative discourse}: repeated exchanges in which $A$ adjusts their evaluative weights in response to feedback from $L$, converging on an alignment that minimizes valuation discrepancy. Third, the matrix can be \emph{inferred from behavioral observation}: an external observer (or the agents themselves) can regress observed changes in $A$'s behavior onto the structure of $L$'s transmitted concepts, yielding an empirical estimate of the transformation that $L$'s influence induces. In each case, the persuasion matrix is not stipulated but derived from interaction, learning, or observation.

\subsubsection{Example: Basis Adjustment through Persuasion}

Let the concept be:

\[
\mathbf{x} =
\begin{bmatrix}
0.8 \\ 0.6 \\ 0.2
\end{bmatrix}
\quad \Rightarrow \quad
\|\mathbf{x}_L\| \approx 1.03
\]

A’s original matrix is:

\[
T_A =
\begin{bmatrix}
0.5 & 0 & 0 \\
0 & 0.5 & 0 \\
0 & 0 & 0.5
\end{bmatrix}
\quad \Rightarrow \quad
\|T_A(\mathbf{x})\| \approx 0.57
\]

To achieve alignment, A applies the Persuasion Matrix:

\[
M_{L \rightarrow A} =
\begin{bmatrix}
2 & 0 & 0 \\
0 & 2 & 0 \\
0 & 0 & 1.6
\end{bmatrix}
\quad \Rightarrow \quad
T_A' = M_{L \rightarrow A} \cdot T_A
\]

\[
T_A' =
\begin{bmatrix}
1.0 & 0 & 0 \\
0 & 1.0 & 0 \\
0 & 0 & 0.8
\end{bmatrix}
\quad \Rightarrow \quad
\|T_A'(\mathbf{x})\| \approx 1.02 \approx \|\mathbf{x}_L\|
\]

Now A values the concept nearly equally to the leader. This constitutes concept alignment: A has reframed its internal space to match the leader’s motivational landscape.

\textbf{Conclusion:} The Persuasion Matrix formalizes how leadership can reshape an agent’s evaluative geometry. Concept alignment occurs not merely through agreement, but through internal transformation of value space—making persuasion a geometric operation on cognition.

\vspace{0.5em}
\subsection{Convex-Hull Leadership and Cultural Innovation}

I examine how leadership can push followers beyond their current range of values. If a leader’s valuation lies outside the convex hull of the group’s existing value structure, adopting their concept expands the group’s conceptual boundaries. This is consistent with the work~\cite{amornbunchornvej2020framework}, which states that a leader is the one who steps outside a convex-hull of the group before others in a local-agreement-system. 

This explains how leaders introduce cultural, ethical, or strategic innovations not emergent from internal consensus, but from structural difference. See Appendix~\ref{apx:convLead} for more rigorous details of Convex-Hull Leadership.

\subsubsection{Example: Innovation beyond Existing Concepts}

Let agents \( A_1, A_2, A_3 \) have respective value directions (interpreted basis vectors):

\[
\mathbf{v}_1 =
\begin{bmatrix}
1 \\ 0 \\ 0
\end{bmatrix}, \quad
\mathbf{v}_2 =
\begin{bmatrix}
0 \\ 1 \\ 0
\end{bmatrix}, \quad
\mathbf{v}_3 =
\begin{bmatrix}
0 \\ 0.5 \\ 1
\end{bmatrix}
\]

The convex hull of these vectors contains all concepts expressible as weighted combinations of them:

\[
\mathbf{x}_{\text{group}} = \alpha_1 \mathbf{v}_1 + \alpha_2 \mathbf{v}_2 + \alpha_3 \mathbf{v}_3, \quad \alpha_i \geq 0, \quad \sum \alpha_i = 1
\]

Now suppose leader \( L \) holds a concept:

\[
\mathbf{x}_L =
\begin{bmatrix}
0.6 \\ 0.6 \\ 1
\end{bmatrix}
\]

This vector cannot be expressed as a convex combination of \( \mathbf{v}_1, \mathbf{v}_2, \mathbf{v}_3 \). It lies outside the group's current value boundary.

\textbf{Innovation through Adoption:} If agents adopt \( \mathbf{x}_L \), they expand their shared span. The group’s new convex hull includes regions previously unreachable—altering what can be collectively imagined or pursued.

This explains how visionary leaders introduce cultural or ethical ideas that do not arise from consensus (interpolation), but from \textit{structural difference}—they shift the group by \emph{enlarging the shape of the space}.

\textbf{Conclusion:} Convex-hull leadership reframes innovation as geometric disruption. Influence, in this case, is not alignment within existing norms— but expansion of the possible.

\subsubsection{Beyond the Convex Hull: Basis Expansion}
The preceding analysis considers innovation within a fixed set of evaluative dimensions. A deeper form of innovation occurs when a leader introduces an entirely \emph{new dimension} into the group's representational basis— not merely a new point outside the existing hull, but a new axis along which evaluation becomes possible. This corresponds to expanding the value space itself: if the group's current basis is $\{b_1, \dots, b_d\}$, the leader introduces $b_{d+1}$, enlarging the space from $\mathbb{R}^d$ to $\mathbb{R}^{d+1}$.

Such basis expansion is warranted when the group's existing dimensions systematically fail to represent some aspect of experience. A principled criterion is that the new dimension should reduce representational error by more than its added complexity costs --- a tradeoff naturally formalized by a Minimum Description Length (MDL) criterion.

To make this precise, define the total description cost of a basis $\mathcal{B}_d = \{b_1, \dots, b_d\}$ used to describe $n$ observations as
\[
L(\mathcal{B}_d) = L_{\text{residual}}(\mathcal{B}_d) + L_{\text{basis}}(\mathcal{B}_d),
\]
where $L_{\text{residual}}$ encodes the unexplained variance (the squared residuals of the observations orthogonal to the span of $\mathcal{B}_d$) and $L_{\text{basis}}$ encodes the basis itself (the number of real coordinates needed to describe both the basis vectors and the observations at a chosen precision). A candidate dimension $b_{d+1}$ is admitted if and only if $L(\mathcal{B}_d \cup \{b_{d+1}\}) < L(\mathcal{B}_d)$.

This distinguishes genuine conceptual innovation from mere elaboration: a new dimension is justified only when the existing geometry is structurally inadequate, not simply when a novel direction is proposed. Leaders who introduce such dimensions do not merely shift the group's position in value space --- they reshape the space itself, enabling forms of evaluation, motivation, and coordination that were previously inexpressible.

\subsubsection{Example: MDL Basis Expansion}
Suppose the group's current basis is $\{b_{\text{utility}}, b_{\text{rights}}\}$ ($d = 2$) and a candidate $b_{\text{care}}$ is proposed by an innovator to capture the relational dimension of moral concern. The group has $n = 5$ moral evaluations whose underlying three-component representations are
\[
v_1 = (0.3, 0.4, 0.9), \; v_2 = (0.5, 0.7, 0.6), \; v_3 = (0.6, 0.3, 0.4), \; v_4 = (0.2, 0.2, 0.8), \; v_5 = (0.4, 0.5, 0.5).
\]
Under the 2D basis, the care-component is orthogonal to the span and contributes the entire residual: $\sum_k \|r_k\|^2 = 0.81 + 0.36 + 0.16 + 0.64 + 0.25 = 2.22$. Under the 3D basis, the residual vanishes.

Using coordinate precision $q = 0.1$ (each real number costs $\ln(1/q) \approx 2.3$ nats, where a nat is the natural-logarithm unit of information) and Gaussian codelength for residuals with variance $\sigma^2 = q^2$, the two cost terms take the form
\[
L_{\text{residual}}(\mathcal{B}_d) = \frac{1}{2 q^2}\sum_k \|r_k\|^2, \qquad
L_{\text{basis}}(\mathcal{B}_d) = d(d + n)\ln(1/q),
\]
the latter accounting for $d$ basis vectors at $d$ coordinates each plus $n$ observation coordinates each, all at precision $q$. Substituting:
\begin{align*}
L(\mathcal{B}_2) &= \tfrac{2.22}{2 q^2} + 2(2+5)\ln(1/q) \;\approx\; 111 + 32.2 \;=\; 143.2 \text{ nats}, \\
L(\mathcal{B}_3) &= 0 + 3(3+5)\ln(1/q) \;\approx\; 0 + 55.2 \;=\; 55.2 \text{ nats}.
\end{align*}
The expansion saves about $88$ nats; the criterion admits $b_{\text{care}}$.

\vspace{0.5em}
\subsection{Birth, Evolution, and Death of Abstract Beings}

Finally, I describe the life cycle of abstract beings. They are born when instantiated by an agent, evolve as they are reinterpreted across agents, and die when they are no longer represented by any cognitive system.

This framework models conceptual systems, ideologies, and identities as structured entities with their own trajectories, shaped by the value geometries of the populations they inhabit. The details of Abstract beings formalization are at Appendix~\ref{apx:abstractBeing}.

\subsubsection{Example: Life Cycle of an Abstract Being}

Abstract beings—such as concepts, ideologies, or identities—are modeled as structured vectors that live, evolve, and disappear through cognitive representation. Their lifespan depends on their instantiation within agents’ value spaces.

\textbf{Birth:} An abstract being \( \mathbf{x} \in \mathbb{R}^3 \) is born when an agent instantiates it:

\[
\mathbf{x} =
\begin{bmatrix}
0.9 \\ 0.4 \\ 0.1
\end{bmatrix}
\quad \text{(justice, humility, aesthetic depth)}
\]

Suppose agent \( A_1 \) adopts \( \mathbf{x} \), assigning it high value:

\[
\|T_1(\mathbf{x})\| = 1.1
\]

\textbf{Evolution:} The concept spreads through a network of agents \( A_1 \rightarrow A_2 \rightarrow A_3 \), undergoing reinterpretation via transformation matrices:

\[
\mathbf{x}_2 = T_2 \cdot T_1(\mathbf{x}), \quad
\mathbf{x}_3 = T_3 \cdot \mathbf{x}_2
\]

Over time, \( \mathbf{x} \) mutates in structure and emphasis (e.g., increasing focus on recognition over humility), forming variants that retain partial identity.

\textbf{Death:} If no agents continue to represent \( \mathbf{x} \) with nonzero projection—i.e., \( T_i(\mathbf{x}) = \mathbf{0} \) for all \( i \)—then \( \mathbf{x} \) has no cognitive instantiation. It vanishes from the collective mind.

\textbf{Conclusion:} Abstract beings possess trajectories across populations: they are born via activation, evolve via reinterpretation, and die through cognitive neglect. Their lifespans are not governed by truth or utility alone, but by alignment with the value geometries of the systems they inhabit.

\subsubsection{Example: Constructing Value Spaces from Linguistic Embeddings}

To sketch one possible implementation, consider agents whose concepts are represented by natural language statements. Each agent $A$ maintains a value space $\mathcal{V}_A \subset \mathbb{R}^d$, where each concept $b_i$ is embedded into $\vec{v}_i^A \in \mathcal{V}_A$ using a large pre-trained language model (e.g., Sentence-BERT or GPT-derived embeddings).

Suppose a concept $b_i^A = \text{``Democracy requires transparency''}$ is represented as $\vec{v}_i^A \in \mathcal{V}_A$. Another agent $B$ holds a similar concept $b_j^B = \text{``Open data is essential for democratic governance''}$, embedded as $\vec{v}_j^B \in \mathcal{V}_B$. A structural alignment map $T_{A \rightarrow B}: \mathcal{V}_A \to \mathcal{V}_B$ could be learned by minimizing interpretive loss over a shared corpus $C_{AB}$, such that:

\[
T_{A \rightarrow B} = \arg\min_{T} \sum_{(b_i, b_j) \in C_{AB}} \left\| T(\vec{v}_i^A) - \vec{v}_j^B \right\|^2
\]

Here, $C_{AB}$ could consist of annotated concept pairs deemed semantically or motivationally aligned by human raters or alignment models.

Once $T_{A \rightarrow B}$ is estimated, concept survivability can be evaluated geometrically: a concept $\vec{v}_k^A$ is filtered out (``dies'') if it maps to the null space or outside of the receptive region of $\mathcal{V}_B$, i.e., if $\|T_{A \rightarrow B}(\vec{v}_k^A)\|$ is below a semantic or normative activation threshold.

\paragraph{Motivational Subspaces.} To identify which aspects of the embedding space correspond to motivation (e.g., fairness, authority, security), I can apply probing classifiers or concept activation vectors (CAVs) \cite{koh2020concept} trained on labeled motivational dimensions. These yield directional vectors $\vec{m}_k$ defining axes of motivational salience within $\mathcal{V}_A$, making it possible to analyze or visualize persuasion gradients.

\paragraph{Interpretation via Behavior.} Alternatively, behavioral data—such as policy votes, social media endorsements, or cooperative decisions—can be used to regress observed influence onto components of the concept vector, allowing empirical estimation of which components survive interpretive transfer across agents.

While this implementation is schematic, it suggests the feasibility of grounding value-space dynamics in observable linguistic or behavioral data using modern representation learning methods.
\section{Applications: Explanation of Real-World Scenarios}

\subsection{A Cognitive-Geometric Model of Social Identity Leadership}

Humans are intrinsically social organisms who continuously exchange affective and behavioral cues, with mimicry occurring even in basic responses such as laughter or yawning \cite{hatfield1993emotional}. Because group membership historically conferred survival advantages—and because belongingness constitutes a fundamental human motivation \cite{baumeister1995need}—social exclusion elicits strong regulatory responses. Individuals who experience rejection tend to exhibit heightened conformity and behavioral attunement to the group as a means of restoring threatened belonging needs~\cite{williams2000cyberostracism, williams2007ostracism}. These well-documented processes underscore a general human tendency toward assimilation, norm adherence, and sensitivity to social evaluation.

Such tendencies form the psychological basis for the emergence and maintenance of leadership. According to Social Identity Theory of Leadership \cite{hogg2001social}, leaders gain influence by (1) positioning themselves as highly prototypical of the group’s identity, a principle supported by extensive organizational research \cite{vanknippenberg2003social}. They further maintain authority by (2) reinforcing normative boundaries through the reward of conformity and the sanction of deviance, consistent with the well-established ``black sheep effect,'' in which in-group deviants are evaluated especially harshly \cite{marques1994blacksheep,jetten2004distinctiveness}. Finally, leaders consolidate cohesion by (3) accentuating contrasts with relevant out-groups—an identity-based strategy central to influential models of leadership and collective mobilization \cite{haslam2020new,escalas2005self}. Because individuals are predisposed to avoid deviance and align with perceived group norms, leaders who successfully define and regulate group identity can effectively stabilize and extend their authority.

Here I develop a cognitive-geometric framework that formalizes these mechanisms within a shared value-space representation of concepts and interpersonal perception. I will formalize these three points using the Cognitive-Geometric Model in the next section. 

\subsubsection{Value Spaces and Perspective Transformations}

Each agent $i$ possesses a value space $\mathcal{V}_i$ and a valuation function 
\[
\mathrm{Val}_i : \mathcal{V}_i \to \mathbb{R},
\]
encoding the subjective importance of any concept vector.  
Agent $i$ expresses a concept or stance $x_i \in \mathcal{V}_i$.  
When agent $j$ observes $i$, they interpret the concept via a perspective transformation:
\[
T_{i \to j} : \mathcal{V}_i \to \mathcal{V}_j,
\]
leading to the perceived value
\[
s_j(i) := \mathrm{Val}_j\big(T_{i \to j}(x_i)\big).
\]

For a group $G$ with member set $F$, define the group-level evaluation of agent $i$ as
\[
S(i) := \frac{1}{|F|} \sum_{j \in F} s_j(i).
\]
This quantity represents how prototypical or identity-defining an agent appears to the group.

\subsubsection{(i) Prototypical Leader as an Optimization}

A leader emerges as the agent whose expressed stance maximizes perceived group value:
\[
L = \arg\max_{i} S(i).
\]
If an aspiring leader $L$ strategically chooses their stance, they solve
\[
x_L^* = \arg\max_{x \in \mathcal{V}_L} \frac{1}{|F|}\sum_{j\in F} 
\mathrm{Val}_j\!\left(T_{L \to j}(x)\right).
\]
This captures Hogg’s first condition: leaders maintain influence by being the most prototypical vector in group value space.

\subsubsection{(ii) Norm Enforcement and the Black Sheep Effect}

Let $p_G$ denote the group prototype, defined as a consensus vector or the projection of $x_L$.  
To measure deviance, project all members into a common group space $\mathcal{V}_G$:
\[
\tilde{x}_i := T_{i \to G}(x_i).
\]
Distance from the prototype is
\[
d(i) := \big\|\tilde{x}_i - p_G\big\|.
\]
Members with small $d(i)$ receive rewards, while those with large $d(i)$ are sanctioned:
\[
\text{Reward}(i) = f(-d(i)), \qquad 
\text{Punish}(i) = g(d(i)),
\]
where $f$ decreases and $g$ increases with distance.  
This formalizes the black sheep effect~\cite{marques1994blacksheep,jetten2004distinctiveness}: in-group deviants are evaluated harshly because they distort identity clarity.

\subsubsection{(iii) Out-Group Contrast via Null-Space Structure}

Let $H$ be another group with leader $y_L \in \mathcal{V}_H$.  
From the perspective of $j \in G$, this leader maps to
\[
\tilde{y}_L^{(j)} := T_{H \to j}(y_L).
\]
If the projection of $\tilde{y}_L^{(j)}$ onto $j$'s valued subspace is small or near-zero, i.e.
\[
\mathrm{Val}_j\big(\tilde{y}_L^{(j)}\big) \ll 
\mathrm{Val}_j\big(T_{L \to j}(x_L)\big),
\]
then $H$ is an out-group.  
Formally, if $\mathcal{V}_j$ decomposes into valued and null subspaces,
\[
\mathcal{V}_j = \mathcal{V}_j^{\text{valued}} \oplus \mathcal{N}_j,
\]
an out-group concept lies primarily in $\mathcal{N}_j$.  
Emphasizing this geometric contrast increases cohesion, implementing Hogg’s third mechanism.

\subsubsection{Example: Illustration in Two Dimensions}

Consider a shared two-dimensional value space $\mathcal{V} = \mathbb{R}^2$, where 
$v_1$ represents loyalty and $v_2$ represents autonomy. 
Two followers have valuation functions:
\[
\mathrm{Val}_1(v_1,v_2) = 2v_1 + 0.5v_2, \qquad 
\mathrm{Val}_2(v_1,v_2) = 2v_1 + v_2.
\]

Three candidates express stances:
\[
x_A =
\begin{bmatrix}
1.0 \\ 0.2
\end{bmatrix}, \quad
x_B =
\begin{bmatrix}
0.4 \\ 0.8
\end{bmatrix}, \quad
x_C =
\begin{bmatrix}
0.7 \\ 0.4
\end{bmatrix}.
\]
Their group scores are
\[
S(A)=2.15, \quad S(B)=1.40, \quad S(C)=1.70.
\]
Thus candidate $A$ emerges as leader:
\[
L = A = \arg\max_i S(i).
\]

Let the group prototype be $p_G = x_A$.  
Two followers express:
\[
x_K =
\begin{bmatrix}
0.9 \\ 0.1
\end{bmatrix}, \qquad
x_I =
\begin{bmatrix}
-0.5 \\ 1.0
\end{bmatrix}.
\]
Distances from the prototype are
\[
d(K)\approx 0.14, \qquad d(I)\approx 1.70.
\]
Hence $K$ is normatively aligned while $I$ is a black sheep.  
Valuation confirms this:
\[
\mathrm{Val}_1(x_K)=1.85,\qquad \mathrm{Val}_1(x_I)=-0.5.
\]

Now consider an out-group with leader
\[
y_L =
\begin{bmatrix}
0 \\ 1.5
\end{bmatrix},
\]
which projects primarily onto the autonomy axis.  
Because follower 1 values loyalty strongly, 
\[
\mathrm{Val}_1(y_L)=0.75 \ll \mathrm{Val}_1(x_A)=2.1,
\]
so $y_L$ lies near the null direction of group $G$'s valued space.  
Thus $H$ becomes an out-group, and contrasting $x_A$ with $y_L$ enhances cohesion.

This example demonstrates how prototypicality, conformity-based norm enforcement, and out-group contrast all emerge naturally from geometric structure in cognitive value spaces.

\subsection{Social Engineering of Concepts for Marketing}

Marketing research increasingly shows that firms do not merely convey product information but actively shape the \textit{value dimensions} consumers use to evaluate the world. Recent work demonstrates that brands construct identity-relevant meaning frameworks that reorganize consumers' evaluative criteria~\cite{escalas2005self}, and that social influence processes systematically shift perceived importance of attributes through normative cues and group-based conformity~\cite{cialdini2004social}. Computational social science further reveals that large-scale messaging can reshape preference structure by modifying shared norms and salience weights across populations \cite{Bail2018}. These findings support the view that marketing operates as a \textit{value-space intervention}, altering the underlying cognitive geometry rather than simply persuading via content. My framework formalizes this by modeling marketing as a transformation of basis vectors, valuation weights, and motivational gradients within each agent’s value space.

Marketing does not merely communicate product features—it reshapes the value spaces through which consumers interpret those features. Each agent $i$ has a current state vector $x_i \in \mathcal{V}_i$, a goal vector $g_i \in \mathcal{V}_i$, and a motivational gradient
\[
\bm{M}_i = g_i - x_i,
\]
which determines the direction in which the agent is disposed to act. When marketing modifies the structure of an agent’s value space—either by introducing new basis vectors or reweighting existing ones—it effectively rotates $\bm{M}_i$ toward the marketed product, increasing purchase likelihood without altering the product itself.

A central mechanism is \textit{basis construction}. Suppose a product possesses an attribute $X =$ ``low calorie'', but consumers initially lack a valued dimension associated with health. A marketing campaign that repeatedly frames low calories as a signal of fitness, responsibility, or an aspirational lifestyle effectively introduces a new socially shared basis vector
\[
b_{\text{healthy}}
\]
into the population’s value structure. As this basis diffuses through social channels, each agent incorporates it into their internal value space:
\[
\mathcal{V}_i' = \mathcal{V}_i \cup \{ b_{\text{healthy}} \}.
\]
Once this dimension acquires a positive subjective weight $w_{i,\text{healthy}} > 0$, the attribute $X$ now projects onto a valued axis. Let $P \in \mathcal{V}_i'$ be the product’s vector representation, and let $p_{\text{healthy}}$ denote its coordinate along $b_{\text{healthy}}$. Its subjective evaluation increases:
\[
\mathrm{Val}_i(P) \leftarrow \mathrm{Val}_i(P) + w_{i,\text{healthy}} \, p_{\text{healthy}} .
\]

This also updates the agent’s goal state, introducing a health-oriented component:
\[
g_i' = g_i + \eta \, b_{\text{healthy}}, \qquad \eta>0,
\]
which yields a new motivational gradient
\[
\bm{M}_i' = g_i' - x_i.
\]
Because $\bm{M}_i'$ is more aligned with the product’s value vector, the agent becomes more motivated to purchase or prefer the product. In this geometric sense, marketing functions as a \textit{structural intervention} that reconfigures the topology of value space so that consumers’ motivational dynamics shift naturally toward the product.

This perspective unifies empirical findings across psychology, marketing, and social influence: campaigns succeed not merely by changing beliefs about a product, but by re-engineering the \textit{cognitive geometry} that determines what consumers value.

\subsection{Empirical Illustration: Cross-Domain Concept Transmission}
\label{sec:EmpiricalIllustration}

To demonstrate that the framework is operationalizable on observable linguistic data, I provide a minimal computational illustration of its central structural prediction: when two agents share a task-relevant basis, an interpretation map $T_{A \to B}$ preserves concepts well; when concepts depend on basis vectors specific to one agent's domain, they project largely outside the other agent's recoverable subspace, instantiating the null-space mechanism.

\subsubsection{Setup}

I consider two stylized agents---a philosopher and an engineer---representing two cognitive bases central to the interdisciplinary remit of this journal. The choice is not arbitrary: philosophy and engineering share a substantive functional basis around inference, justification, consistency, robustness, identity-through-change, and well-defined specification, while each retains a sharply domain-specific technical vocabulary that the other does not share. The framework predicts that paired concepts (those exercising the shared basis) should transmit across $T$ with low null-space residual, while domain-specific jargon should project largely outside the receiver's recoverable subspace.

Concepts are expressed as short natural-language sentences in each agent's professional register and embedded into $\mathbb{R}^{384}$ using the publicly available pretrained sentence encoder \texttt{all-MiniLM-L6-v2}~\cite{reimers2019sentence, wang2020minilm} from the \texttt{sentence-transformers} library. The paired set consists of fifteen philosopher/engineer sentence pairs drawn from a shared inferential and specificational register; five additional sentences in each domain represent profession-specific jargon (e.g., \emph{Husserlian epoch\'e}, \emph{transcendental idealism} for the philosopher; \emph{von Mises stress}, \emph{Paris law of fatigue crack growth} for the engineer). The full concept list is provided in Appendix~\ref{apx:concept-list}.

\subsubsection{Estimating the Interpretation Map}

Following standard practice in cross-lingual embedding alignment \cite{conneau2017word, schonemann1966generalized}, I estimate the interpretation map $M : \mathcal{V}_{\varphi} \to \mathcal{V}_{\mathrm{Eng}}$ by orthogonal Procrustes alignment on the fifteen paired concept embeddings. This yields the linear transformation that best maps philosopher-expressed concepts to their engineering counterparts in the embedding space; the reverse map $M^{T}$ provides the inverse direction.

\subsubsection{Subspace Decomposition}

For any concept vector $v$ mapped via $M$, I decompose $Mv$ into a component lying \emph{within} the span of the receiver's paired-concept embeddings and a residual component \emph{outside} that span, using a singular value decomposition of the receiver's paired basis. The fraction of squared $L^{2}$ mass outside the recoverable subspace serves as an empirical proxy for null-space annihilation: high values indicate that the concept's representation depends on directions the receiver's task vocabulary does not span.

For paired concepts, I report leave-one-out (LOO) results: each paired concept is decomposed against the subspace spanned by the \emph{other} fourteen paired receivers, eliminating the trivial saturation that would arise from including the concept's own partner.

\subsubsection{Results}

Table~\ref{tab:empirical} reports the decomposition. Paired concepts---those exercising the shared inferential/specificational basis---show outside-fractions of $73.6\%$ and $73.2\%$ in the $\varphi \to \mathrm{Eng}$ and $\mathrm{Eng} \to \varphi$ directions respectively. Profession-specific jargon shows substantially higher outside-fractions in both directions: $83.0\%$ for philosopher-only concepts mapped into the engineering subspace, and $87.1\%$ for engineer-only concepts mapped into the philosopher subspace.

\begin{table}[h]
\centering
\begin{tabular}{lcccc}
\hline
\textbf{Concept type} & \textbf{Recon. err.} & \textbf{Within} & \textbf{Outside} & \textbf{Outside \%} \\
                     & ($L^{2}$ distance)   & (mass fraction) & (mass fraction) &                      \\
\hline
Paired, LOO ($\varphi \to \mathrm{Eng}$) & $0.250 \pm 0.057$ & $0.264 \pm 0.085$ & $0.736 \pm 0.085$ & $73.6\%$ \\
Paired, LOO ($\mathrm{Eng} \to \varphi$) & $0.250 \pm 0.057$ & $0.268 \pm 0.069$ & $0.732 \pm 0.069$ & $73.2\%$ \\
$\varphi$-only jargon ($\varphi \to \mathrm{Eng}$) & --- & $0.170 \pm 0.048$ & $0.830 \pm 0.048$ & $83.0\%$ \\
Eng-only jargon ($\mathrm{Eng} \to \varphi$) & --- & $0.129 \pm 0.051$ & $0.871 \pm 0.051$ & $87.1\%$ \\
\hline
\end{tabular}
\caption{Subspace decomposition of cross-domain concept transmission ($n = 15$ paired, $n = 5$ jargon per side, $384$-dimensional embeddings). Reconstruction error is the $L^{2}$ norm of the difference between the mapped sender-side embedding and the target receiver-side embedding (both unit-length), with values near $0$ indicating near-perfect cross-domain transmission. Within- and outside-mass are reported as fractions of squared $L^{2}$ mass and sum to approximately $1$ by orthogonality. Paired concepts retain higher within-subspace mass; profession-specific jargon projects predominantly outside the receiver's recoverable subspace.}
\label{tab:empirical}
\end{table}

The predicted asymmetry between paired and jargon concepts is statistically supported in both directions. A one-sided Mann-Whitney $U$ test (jargon outside-fraction greater than paired LOO outside-fraction) yields $p = 0.010$ for the $\varphi \to \mathrm{Eng}$ direction and $p < 0.001$ for the $\mathrm{Eng} \to \varphi$ direction; the pooled test across both directions yields $p < 0.001$. The Welch's $t$-test gives qualitatively identical results.

\subsubsection{Interpretation}

The asymmetry between paired and jargon concepts is precisely what the framework predicts. Two agents who share an inferential and specificational basis---philosopher and engineer both reasoning about validity, justification, robustness, consistency, identity-through-change---communicate via a $T$ that is approximately invertible on their shared task subspace, in the sense of the Local Coherence Theorem (Appendix~\ref{apx:LocalCoherence}). Concepts that depend on basis vectors unique to one domain---qualia and transcendental idealism, von Mises stress and the Paris law---have no significant projection in the other agent's recoverable space and are largely annihilated under cross-domain transmission.

Per-concept variation is informative (full per-row results in Appendix~\ref{apx:concept-list}). Among paired concepts, those concerning identity-through-parts-replacement (Ship of Theseus / structural-member replacement) and verifiability of application conditions (concept grounding / well-defined requirements) show outside-fractions in the $84\%$--$90\%$ range, approaching the jargon level. These are pairings whose surface vocabulary diverges more sharply than the central inferential cluster, illustrating that shared-basis structure is a matter of degree rather than kind---a pattern itself consistent with the framework's geometric formulation.

\subsubsection{Limitations of the Illustration}

This demonstration is proof of operationalizability, not empirical validation. Four caveats apply. First, the paired set was author-constructed rather than drawn from an expert-validated cross-domain corpus; the full concept list is provided in Appendix~\ref{apx:concept-list} so that any pairing can be inspected. Second, sentence-BERT embeddings encode distributional semantic similarity rather than agent-specific evaluative bases; they serve here as a tractable surrogate for value spaces rather than as a full instantiation of them. Third, the orthogonal Procrustes estimator constrains $M$ to be a rotation, more restrictive than the general linear maps the framework considers; an unconstrained least-squares estimator would yield richer transformations at the cost of more parameters. Fourth, the sample size ($n = 15$ paired, $n = 5$ jargon per side) is small relative to the embedding dimensionality, and the statistical tests should be read as supporting an illustrative observation rather than establishing a population-level claim about philosophy and engineering as domains; relatedly, the contrast is made between paired and jargon pairings at fixed subspace dimensionality and is not benchmarked against a null distribution over random pairings, which would belong to a fuller empirical study.

Full empirical instantiation of value spaces---including evaluative and motivational dimensions, larger and expert-validated concept sets, and estimation of $T$ from interactive discourse rather than static pairings---is a substantial undertaking reserved for future work. The illustration demonstrates that the framework's central structural prediction (null-space-driven failure of cross-basis transmission) is reproducible on observable language data using standard tools, and that the magnitude of the asymmetry between paired and jargon concepts is consistent in direction across both forward and reverse interpretation maps.

\section{Discussion}

This paper has introduced a formal, vector-space model of conceptual transmission and leadership that reframes influence as a problem of structural compatibility, rather than persuasion or proximity. By representing agents’ cognitive architectures as personalized value spaces, I have shown how concepts—formalized as abstract beings—can propagate, fail, mutate, or die depending on their interpretive viability.

The core insight is epistemic and motivational survivability: concepts must survive interpretive null spaces to exert influence. This gives rise to a structural notion of leadership based on geometric reachability, contrasting sharply with classical models that emphasize centrality, charisma, or authority. In my framework, a leader is not defined by position, but by their concepts' structural intelligibility across interpretive transformations, since cognition itself is governed by transformation constraints~\cite{piantadosi2024concepts}.

From a philosophical standpoint, the model formalizes the intentional stance \cite{dennett1987intentional} algebraically: understanding becomes a function of mutual interpretability across cognitive geometries. The framework supports pluralist epistemologies \cite{longino1990science} by showing that deep disagreement can result from unbridgeable divergence in value spaces—not simply misinformation or irrationality.

This approach also enriches the literature on AI alignment by providing a structural account of interpretability and value transfer \cite{gabriel2020artificial}. While many approaches focus on learning correct reward functions, my model suggests that value alignment may require representational isomorphism or adaptive persuasion matrices to ensure interpretive fidelity. The paper provides concrete mechanisms for generating such matrices, drawing on cross-space alignment methods from distributional semantics~\cite{conneau2017word, schonemann1966generalized}.

Beyond unifying existing phenomena, the framework's explanatory power lies in its capacity to diagnose cases that existing approaches cannot structurally account for. Probabilistic diffusion models~\cite{kempe2003maximizing} predict whether influence spreads through a network, but not what happens to the content of a concept during transmission; Cognitive Geometry predicts the specific direction of distortion via projection onto the receiver's valued subspace. Logical and symbolic models of belief~\cite{levesque1984logic} formalize inferential consistency but cannot explain why two individually consistent, well-informed agents may entirely fail to understand one another; in this framework, such failure is diagnosed as null-space annihilation. Classical epistemology predicts that agents sharing evidence and reasoning correctly should converge; Cognitive Geometry shows that convergence can fail even under idealized rationality when value spaces diverge. Network models of leadership predict influence from centrality and connection strength; this framework predicts leadership failure despite high centrality when the concept is annihilated along every available path. Finally, conceptual-space models~\cite{gardenfors2004conceptual} characterize similarity within a shared space but do not address when cross-space grounding becomes structurally impossible; the null-space criterion provides exactly this diagnostic. In each case, the geometric framework identifies a structural mechanism---not a probabilistic tendency or an epistemic deficiency---that determines the outcome.

Looking forward, this framework opens up numerous empirical and theoretical directions. Section~\ref{sec:EmpiricalIllustration} provides a first computational illustration, estimating an interpretation map between philosophical and engineering value spaces from sentence embeddings and recovering the predicted asymmetry between paired and jargon concepts. Full empirical instantiation---larger expert-validated concept sets, estimation of interpretation maps from interactive discourse, and application of no-null-space leadership metrics to multi-agent simulations---remains as future work. The framework also invites normative inquiry: how should cognitive architectures be structured to foster mutual intelligibility, or to limit the spread of epistemically destructive concepts?

Ultimately, this work aims to reorient our understanding of conceptual dynamics—away from content alone, and toward the structured geometries that render meaning transmissible, or not. Influence is not merely what spreads, but what survives the interpretive geometry of minds.

\section{Limitations}

While the Value-Space Theory of Concepts and Meaning offers a unified geometric account of influence, representational compatibility, and motivational alignment, several limitations warrant discussion.

\textbf{Abstraction and Idealization.}
The model treats concepts as vectors and interpretation as linear transformations. This provides analytic clarity but simplifies the nonlinear, context-dependent ways in which humans represent and revise concepts. Real cognitive processes exhibit hysteresis, ambiguity, framing effects, and associative structures that are not strictly linear. Thus, the framework should be understood as an idealized computational geometry of cognition rather than a full psychological model.

\textbf{Linearity of Interpretation Maps.}
Assuming linearity allows for elegant analysis of null spaces and compositional reachability, and is motivated by converging evidence that conceptual representations behave in approximately linear ways (see Section~\ref{sec:RelatedWork} for a detailed justification). However, real interpretive processes in both humans and artificial agents involve thresholding, nonlinear gating, and context-sensitive modulation. A piecewise-linear extension of the Local Coherence Theorem is discussed informally in Appendix~\ref{apx:LocalCoherence}, where the structural predictions of the framework---null-space annihilation and bounded round-trip distortion---are shown to survive piece-by-piece. A fully nonlinear treatment remains a direction for future work.

\textbf{Valuation Function Specificity.}
The formal results (Theorems~\ref{thm:MotiConv} and~\ref{thm:ConvHull}) are stated for a general valuation function but proved under the simplifying assumption that valuation reduces to a vector norm or component sum. Whether the results extend to broader classes of valuation functions—for instance, nonlinear or context-dependent evaluations—remains to be established. The current formulation thus applies most directly to settings where valuation is well-approximated by magnitude or weighted summation.

\textbf{Dependence on Fixed Value Spaces.}
Agents in the theory possess relatively stable value spaces, yet human values evolve through development, socialization, and emotional experience. Although persuasion matrices and convex-hull leadership model basis reshaping, the framework lacks a fully dynamical account of long-term value-space evolution. Incorporating temporal drift, cultural co-adaptation, or meta-learning processes is a direction for future work.

\textbf{Interpretation Loss as Structural Rather than Affective.}
Null-space filtering formalizes miscommunication as a geometric loss of representational content, but real-world failures of understanding often arise from affective, strategic, or identity-driven factors. A richer model would allow interpretation maps to depend on emotional states, trust, incentives, or social identity commitments.

\textbf{Empirical Identifiability.}
The paper provides a first computational illustration (Section~\ref{sec:EmpiricalIllustration}) of how value spaces and interpretation maps can be estimated from sentence-embedding data, recovering the predicted asymmetry between paired and jargon concepts. However, the illustration is a proof of operationalizability rather than full empirical validation: the concept set is small and author-constructed, sentence embeddings serve as a tractable surrogate rather than a faithful instantiation of evaluative value spaces, and the orthogonal Procrustes estimator imposes constraints more restrictive than the general linear maps the framework considers. More broadly, concept vectors may not be separable, interpretation maps may not be uniquely identifiable from observations, and data can be noisy or sparse. The practical identifiability of value spaces and persuasion matrices from finite observational data at scale remains an open problem; the model is thus more theoretically developed than empirically instantiated.

\textbf{Scope of Application.}
While the framework aims to unify conceptual transmission, leadership, and alignment, its empirical predictions remain preliminary. Applying it to organizational leadership, political polarization, or human--AI interaction requires domain-specific assumptions not fully developed here.

\textbf{Philosophical Commitments.}
The theory presupposes geometrically structured mental representation. Philosophers skeptical of representationalism or cognitivism may question whether geometric formalization captures normativity, intentionality, or meaning. While compatible with the intentional stance, the framework does not resolve foundational debates about the metaphysics of mental content.

Overall, these limitations do not undermine the central contribution of the Value-Space Theory of Concepts and Meaning, but they delineate the boundaries of the current formulation and suggest directions for refinement and extension.

\section{Conclusion}

This paper has developed a cognitive–geometric framework in which concepts,
goals, and interpretations are represented not as propositional items but as
vectors embedded within personalized value spaces. By focusing on the
transformations that connect these spaces, rather than the contents they
contain, the framework reveals structural constraints on intelligibility,
alignment, and influence that persist even under idealized rationality.

The resulting picture explains why communication may succeed, drift, or fail
entirely: conceptual transmission is always mediated by interpretation maps whose
null spaces determine which aspects of a message survive. Leadership and
collective coordination likewise become geometrically circumscribed: an agent
can influence another only when their abstract beings avoid annihilation under
the relevant composite transformations. Motivational convergence, partial
understanding, concept death, and innovation outside the group’s convex hull all
emerge as consequences of linear-algebraic structure rather than psychological
defects or informational scarcity.

Although the formal model is abstract, it offers a unified vocabulary for
analyzing phenomena in social epistemology, cognitive science, and AI
alignment, where disagreements often stem from representational divergence
rather than irrationality. Future work may extend the framework to nonlinear
value spaces, empirically estimated interpretation maps, or networks in which
agents strategically reshape one another’s bases. The broader aim is to treat
meaning, motivation, and influence as geometrically grounded processes, thereby
expanding the theoretical tools available for studying communication across
heterogeneous minds.

In this sense, the paper provides not only a model but a foundation for a
research program: one that approaches conceptual dynamics by attending to the
structure of the spaces in which agents think, value, and understand one
another. The code of the empirical illustration demo can be found at \href{https://github.com/DarkEyes/Cognitive-Geometry}{GitHub repository}.

\section*{Acknowledgments}
The author thanks Thanadej Rattanakornphan for early discussions that helped spark the intuition behind value-space representations of goals and current states. 
The author also thanks ChatGPT (OpenAI), Gemini (Google), and Claude (Anthropic) for assistance in improving the clarity of exposition during the writing process. All conceptual development, theoretical constructs, formal definitions, proofs, and interpretations are solely the author’s own, and all mathematical and philosophical content was independently verified by the author. Lastly, the author thanks Usawadee Chaiprom for her kindness and being supportive.


\appendix

\section{Appendix: Symbol Table}
I summarize the main symbols used in the formal model in Table~\ref{tab:symbols}.

\begin{table}[h!]
\centering
\begin{tabular}{|c|p{11cm}|}
\hline
\textbf{Symbol} & \textbf{Description} \\
\hline
$i,j,k,\ell$ & Indices for agents in a cognitive system. \\
\hline
$L$ & A distinguished agent acting as a leader. \\
\hline
$\mathcal{V}_i$ & Value space of agent $i$, modeled as a finite-dimensional real vector space over $\mathbb{R}$. \\
\hline
$\{b_{i1},\dots,b_{id_i}\}$ & Basis vectors of $\mathcal{V}_i$; each basis element corresponds to a fundamental evaluative or value dimension. \\
\hline
$X$ & An abstract being (concept, ideology, narrative, identity, etc.) considered at a population level. \\
\hline
$X_i \in \mathcal{V}_i$ & Representation of abstract being $X$ in agent $i$'s value space. \\
\hline
$x_i \in \mathcal{V}_i$ & Current state of agent $i$ in value space (e.g., present configuration of values or goals). \\
\hline
$g_i \in \mathcal{V}_i$ & Goal state of agent $i$ in value space. \\
\hline
$\bm{M}_i = g_i - x_i$ & Motivational gradient for agent $i$, pointing from current state to goal. \\
\hline
$\mathrm{Val}_i : \mathcal{V}_i \to \mathbb{R}$ & Valuation function of agent $i$, assigning subjective importance to vectors in $\mathcal{V}_i$. \\
\hline
$T_{i \to j} : \mathcal{V}_i \to \mathcal{V}_j$ & Interpretation map (linear transformation) from $\mathcal{V}_i$ to $\mathcal{V}_j$; encodes how agent $j$ interprets $i$'s representations. \\
\hline
$M_{L \to A}$ & Persuasion Matrix: a transformation that adjusts agent $A$'s interpretive basis so that $A$'s valuation of a concept aligns with leader $L$'s valuation. \\
\hline
$\mathrm{Null}(T_{i \to j})$ & Null space of the interpretive map $T_{i \to j}$; components mapped here are unintelligible to agent $j$. \\
\hline
$T_{L \rightsquigarrow i}$ & Composite interpretation map along a path from leader $L$ to agent $i$, obtained by composing the edge maps $T_{v_s \to v_{s+1}}$. \\
\hline
$\mathcal{C}_L(X)$ & Leadership component of $X$ for leader $L$: the set of agents $i$ for which there exists a path $L \rightsquigarrow i$ with $X \notin \mathrm{Null}(T_{L \rightsquigarrow i})$. \\
\hline
$G = (V,E)$ & Directed influence graph, with node set $V$ (agents) and edge set $E$ (possible influence links). \\
\hline
$\mathrm{infl}(i \to j)$ & Influence probability on edge $(i,j) \in E$; probability that an influence attempt from $i$ to $j$ succeeds at a given time step. \\
\hline
$N(i)$ & Neighborhood of agent $i$ in the influence graph (e.g., set of followers or interaction partners). \\
\hline
$\mathrm{Conv}(\cdot)$ & Convex hull operator. For a set of vectors, returns all convex combinations of those vectors. \\
\hline
$\|\cdot\|_i$ & Norm on $\mathcal{V}_i$ induced by an inner product $\langle \cdot, \cdot \rangle_i$; used to measure semantic or motivational activation for agent $i$. \\
\hline
$f_i(\cdot)$ & Generic update function governing the evolution of $X_i$ given its current state, value space, and valuation. \\
\hline
\end{tabular}
\caption{Summary of formal symbols and concepts used in the value-space model of concepts and influence.}
\label{tab:symbols}
\end{table}

\section{Appendix: Concept Formalization}
\subsection{Value Spaces and Motivation}
\label{apx:ValMot}

\begin{definition}[Individual Value Space]
Each agent $i$ possesses a finite-dimensional real vector space
\[
\mathcal{V}_i = \mathrm{span}\{b_{i1},\dots,b_{id_i}\},
\]
whose basis vectors represent fundamental evaluative or value dimensions.
\end{definition}

\begin{definition}[Value Function]
Agent $i$ has a valuation function 
\[
\mathrm{Val}_i : \mathcal{V}_i \to \mathbb{R},
\]
encoding the subjective importance of any vector in $\mathcal{V}_i$.
\end{definition}

\begin{definition}[Motivation Gradient]
Let $x_i \in \mathcal{V}_i$ be the agent’s current state and $g_i \in \mathcal{V}_i$ its goal.  
Define the motivational gradient
\[
\bm{M}_i = g_i - x_i.
\]
A dynamical interpretation is
\[
\frac{dx_i}{dt} \propto \bm{M}_i \propto \nabla \mathrm{Val}_i(x_i).
\]
\end{definition}

\begin{remark}
This models motivation as movement toward directions that increase subjective value.  
\end{remark}

\subsection{Abstract Beings}
\label{apx:AbsBe}

\begin{definition}[Abstract Being]
An abstract being $X$ for agent $i$ is a vector $X_i \in \mathcal{V}_i$.
\end{definition}

\begin{definition}[Existence]
$X$ exists for agent $i$ if $\|X_i\|_i > 0$.
\end{definition}

\begin{definition}[Death]
$X$ dies in population $P$ if $\|X_i\|_i = 0$ for all $i \in P$.
\end{definition}

Abstract beings represent concepts, ideologies, identities, or conceptual entities.

\subsection{Interpretation Maps and Influence Consistency}
\label{apx:InMap}

For each ordered pair $(A,B)$ of agents, define a (possibly approximate) linear interpretation map
\[
T_{A\to B} : \mathcal{V}_A \to \mathcal{V}_B.
\]

\begin{definition}[Forward Consistency]
When $A$ influences $B$ about abstract being $X$,
\[
T_{A\to B}(X_A) \approx X_B^{\mathrm{new}}.
\]
\end{definition}

\begin{definition}[Backward Consistency]
The update also satisfies
\[
T_{B\to A}(X_B^{\mathrm{new}}) \approx X_A.
\]
\end{definition}

\begin{definition}[Value Alignment]
Successful influence additionally requires
\[
\mathrm{Val}_B(X_B^{\mathrm{new}}) \approx \mathrm{Val}_A(X_A).
\]
\end{definition}

\subsubsection{Local Coherence Theorem}
\label{apx:LocalCoherence}

\begin{theorem}[Local Coherence]
\label{thm:LocalCoherence}
Let $T_{A\to B} : \mathcal{V}_A \to \mathcal{V}_B$ and 
      $T_{B\to A} : \mathcal{V}_B \to \mathcal{V}_A$
be interpretation maps.  
Suppose there exists a concept subspace $\mathcal{U}\subseteq \mathcal{V}_A$ such that:

\begin{enumerate}
\item[(1)] \textbf{(Forward consistency)} 
      $\|T_{A\to B}(x) - x\|\le \varepsilon\|x\|$ for all $x\in\mathcal{U}$.
\item[(2)] \textbf{(Backward consistency)} 
      $\|T_{B\to A}(y) - y\|\le \varepsilon\|y\|$ for all 
      $y\in T_{A\to B}(\mathcal{U})$.
\end{enumerate}

Then the round-trip operator 
\[
R := T_{B\to A}\circ T_{A\to B}
\]
satisfies, for all $x\in\mathcal{U}$,
\[
\|R(x)-x\| \le (2\varepsilon+\varepsilon^2)\|x\|.
\]
Moreover, if $\varepsilon$ is small, then for each fixed $k\ge 1$,
\[
\|R^k(x)-x\| 
\le \bigl((1 + 2\varepsilon+\varepsilon^2)^k - 1\bigr)\,\|x\|
= O(k\varepsilon)\,\|x\|
\quad\text{as }\varepsilon\to 0.
\]
Thus, under small forward and backward distortion, repeated communication 
keeps the representation within a controlled $O(k\varepsilon)$ neighborhood 
of the original concept.
\end{theorem}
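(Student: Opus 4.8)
The plan is to establish the single round-trip bound by a two-term triangle-inequality split, then bootstrap to $R^k$ by induction on $k$. Before starting I would fix a norm identification between $\mathcal{V}_A$ and $\mathcal{V}_B$ (e.g.\ via matched bases), so that mixed expressions like $\|T_{A\to B}(x)-x\|$ are meaningful; the hypotheses as stated implicitly require this.

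First I would prove the one-step estimate. For $x\in\mathcal{U}$, set $y:=T_{A\to B}(x)\in T_{A\to B}(\mathcal{U})$, so that $R(x)=T_{B\to A}(y)$, and write $R(x)-x=(T_{B\to A}(y)-y)+(y-x)$. Hypothesis (2) controls the first summand by $\varepsilon\|y\|$, hypothesis (1) controls the second by $\varepsilon\|x\|$, and $\|y\|\le\|x\|+\|y-x\|\le(1+\varepsilon)\|x\|$ follows from (1) again; combining yields $\|R(x)-x\|\le\varepsilon(1+\varepsilon)\|x\|+\varepsilon\|x\|=(2\varepsilon+\varepsilon^2)\|x\|$, the first claim.

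Next I would iterate. Writing $\lambda:=2\varepsilon+\varepsilon^2$ (so that $1+\lambda=(1+\varepsilon)^2$), I would prove $\|R^k(x)-x\|\le((1+\lambda)^k-1)\|x\|$ by induction on $k$: the inductive step decomposes $R^{k+1}(x)-x=(R(R^kx)-R^kx)+(R^kx-x)$, bounds the first bracket by $\lambda\|R^kx\|$ via the one-step estimate, uses $\|R^kx\|\le\|x\|+\|R^kx-x\|\le(1+\lambda)^k\|x\|$ from the inductive hypothesis, and telescopes the resulting geometric sum, since $\lambda(1+\lambda)^k+((1+\lambda)^k-1)=(1+\lambda)^{k+1}-1$. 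Taylor-expanding $(1+\lambda)^k=(1+\varepsilon)^{2k}=1+2k\varepsilon+O(\varepsilon^2)$ for fixed $k$ as $\varepsilon\to0$ then gives the stated $O(k\varepsilon)\|x\|$ asymptotics.

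The hard part will be justifying the iteration rather than the algebra: the one-step estimate is proved only for arguments lying in $\mathcal{U}$, yet the inductive step feeds it $R^kx$, which a priori need not lie in $\mathcal{U}$; moreover the error operator $E:=R-I$ does not preserve $\mathcal{U}$ in general, so no submultiplicative $\|E^k\|$ bound is available for free. I would close this gap either by adding the mild structural hypothesis $R(\mathcal{U})\subseteq\mathcal{U}$ (natural, since $R$ is linear and $O(\varepsilon)$-close to the identity on $\mathcal{U}$, and automatic when $\mathcal{U}=\mathcal{V}_A$), or by rereading hypotheses (1)--(2) as conditions holding along the actual transmission orbit $\{R^jx,\,T_{A\to B}(R^jx)\}_{j\ge0}$, which is the operationally intended meaning. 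With either fix in place, the remainder is routine triangle-inequality bookkeeping.
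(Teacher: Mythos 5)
Your one-step bound is exactly the paper's argument: the same split $R(x)-x=(T_{B\to A}(y)-y)+(y-x)$ with $y=T_{A\to B}(x)$, the same estimate $\|y\|\le(1+\varepsilon)\|x\|$, and the same constant $2\varepsilon+\varepsilon^2$. For the iteration you diverge mildly: you run an induction on $k$, decomposing $R^{k+1}x-x=(R(R^kx)-R^kx)+(R^kx-x)$ and telescoping with $\lambda(1+\lambda)^k+((1+\lambda)^k-1)=(1+\lambda)^{k+1}-1$, whereas the paper writes $R=I+E$ and expands $R^k-I=\sum_{s=1}^k\binom{k}{s}E^s$, using $\|E^s\|\le\|E\|^s$ with $\|E\|\le 2\varepsilon+\varepsilon^2$. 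The two routes give the identical bound $((1+2\varepsilon+\varepsilon^2)^k-1)\|x\|$ and the same $O(k\varepsilon)$ asymptotics, so the difference is cosmetic; if anything, your induction makes the domain issue visible at exactly the step where it matters. That issue — the one-step estimate is proved only on $\mathcal{U}$, yet the iterate $R^kx$ (or, in the paper's version, the vector $E^{s-1}x$ fed into the next power of $E$) need not lie in $\mathcal{U}$ — is a genuine gap, and it is present in the paper's proof as well: the submultiplicative bound $\|E^s\|\le\|E\|^s$ is only licensed by an operator norm controlled on all of $\mathcal{V}_A$, or by invariance $E(\mathcal{U})\subseteq\mathcal{U}$, neither of which the hypotheses guarantee. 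Your proposed repairs (assume $R(\mathcal{U})\subseteq\mathcal{U}$, take $\mathcal{U}=\mathcal{V}_A$, or read the consistency conditions along the transmission orbit) are exactly what is needed and would patch the published argument too, as would your preliminary remark that the mixed-space norms require an identification of $\mathcal{V}_A$ with $\mathcal{V}_B$. In short: correct, same core argument, with a more careful treatment of the iteration than the paper itself provides.
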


\begin{proof}
For $x\in\mathcal{U}$, forward consistency gives
\[
\|T_{A\to B}(x)-x\|\le \varepsilon\|x\|.
\]
Applying backward consistency to $T_{A\to B}(x)$,
\[
\|T_{B\to A}(T_{A\to B}(x)) - T_{A\to B}(x)\|
   \le \varepsilon\|T_{A\to B}(x)\|.
\]
I now bound $\|T_{A\to B}(x)\|$ via the triangle inequality:
\[
\|T_{A\to B}(x)\|
  =\|x + (T_{A\to B}-I)x\|
  \le \|x\| + \|(T_{A\to B}-I)x\|
  \le \|x\| + \varepsilon\|x\|
  = (1+\varepsilon)\|x\|.
\]
Hence
\[
\|R(x)-T_{A\to B}(x)\|
   \le \varepsilon(1+\varepsilon)\|x\|
   = (\varepsilon + \varepsilon^2)\|x\|.
\]
Adding the forward-consistency deviation,
\[
\|T_{A\to B}(x)-x\| \le \varepsilon\|x\|,
\]
I obtain
\[
\|R(x)-x\|
 \le (\varepsilon + \varepsilon^2)\|x\| + \varepsilon\|x\|
 = (2\varepsilon+\varepsilon^2)\|x\|.
\]

For the iterative bound, write $R = I + E$, so $E = R-I$. Then
\[
R^k - I = (I+E)^k - I
       = \sum_{s=1}^k \binom{k}{s} E^s,
\]
and therefore
\[
R^k(x) - x = \sum_{s=1}^k \binom{k}{s} E^s x.
\]
Taking norms and using $\|E^s\|\le \|E\|^s$,
\[
\|R^k(x)-x\|
\le \sum_{s=1}^k \binom{k}{s}\|E\|^s \|x\|
= \bigl((1+\|E\|)^k - 1\bigr)\|x\|.
\]
From the one-step bound I have, for all $x\in\mathcal{U}$,
\[
\|(R-I)(x)\| \le (2\varepsilon+\varepsilon^2)\|x\|,
\]
so the operator norm of $E=R-I$ on $\mathcal{U}$ satisfies
$\|E\|\le 2\varepsilon+\varepsilon^2$, and hence
\[
\|R^k(x)-x\|
\le \bigl((1 + 2\varepsilon+\varepsilon^2)^k - 1\bigr)\|x\|.
\]

For fixed $k$ and small $\varepsilon$, the binomial expansion gives
\[
(1 + 2\varepsilon+\varepsilon^2)^k - 1 = O(k\varepsilon),
\]
so the deviation is $O(k\varepsilon)\|x\|$ as claimed.
\end{proof}

Interpretation:  
Even if two agents interpret concepts differently, as long as each can map the other’s representation back to its own, the concept remains mutually intelligible and stable across communication.

\begin{remark}[Piecewise-Linear Extension]
The Local Coherence Theorem is stated for linear interpretation maps, but its conclusion extends to piecewise-linear maps without modification of the core argument. Let $T_{A\to B}$ be a piecewise-linear map defined by a partition of $\mathcal{V}_A$ into pieces $\{P_\alpha\}$ on each of which $T_{A\to B}$ is linear, and suppose forward and backward consistency hold on each piece with corresponding distortions $\varepsilon_\alpha$. For any concept $x$ whose round-trip image $R(x)$ remains within a single piece $P_\alpha$, the bound $\|R(x)-x\| \le (2\varepsilon_\alpha + \varepsilon_\alpha^2)\|x\|$ holds with $\varepsilon_\alpha$ in place of $\varepsilon$. For concepts whose trajectory crosses piece boundaries, the bound applies piece-by-piece with $\varepsilon$ replaced by $\max_\alpha \varepsilon_\alpha$ over the pieces visited. The null-space-annihilation mechanism likewise survives piecewise extension: each linear piece carries its own null space, and a concept is annihilated under cross-piece transmission to the extent that its trajectory enters those null spaces. Loss of single-map compactness is the cost of extension; the structural predictions of the framework are preserved.
\end{remark}

\subsection{Null Spaces and Cognitive Blindness}
\label{apx:NullCog}

\begin{definition}[Null Space]
\[
\mathrm{Null}(T_{A\to B}) = \{ v\in\mathcal{V}_A : T_{A\to B}(v) = 0 \}.
\]
\end{definition}

Vectors in the null space represent concept components that cannot be perceived by $B$.

Interpretation:  
Null space captures the cognitive blind spots of communication: concept components that vanish during interpretation.

\subsection{Network Influence Model}
\label{apx:NetInfl}

Let $G=(V,E)$ be a directed graph with influence probabilities $\mathrm{infl}(i\to j)\in(0,1]$. Below, I modify the Independent Cascade Model of~\cite{kempe2003maximizing} so that agents can repeatedly attempt to influence their neighbors. This modification makes coordination possible in a probabilistic sense, but the structural transmission of an abstract being can still be blocked if a null-space mapping exists along a path in the graph. 

\begin{definition}[Repeated Independent Influence Process]
At each time step:
\begin{itemize}
\item Agents holding $X$ attempt to influence neighbors.
\item Each attempt along $(i,j)$ succeeds with probability $\mathrm{infl}(i\to j)$.
\item Failed attempts may repeat at later steps.
\end{itemize}
\end{definition}

\begin{lemma}[Edge Activation]
If $\mathrm{infl}(i\to j)=p>0$ and each influence attempt along $(i,j)$
succeeds independently with probability $p$ at each time step, then the
probability that influence never succeeds is zero.
\end{lemma}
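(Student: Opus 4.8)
The statement is a routine consequence of the Borel--Cantelli lemma in its first-moment form, or more elementarily of the fact that a geometric series with ratio $1-p < 1$ has a vanishing tail. I will write the plan accordingly.

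\textbf{Proof plan.} The plan is to model the sequence of influence attempts along the edge $(i,j)$ as a sequence of independent Bernoulli trials, each succeeding with probability $p>0$, and to show that the event ``never succeeds'' has probability zero. First I would fix the edge $(i,j)$ and let $A_t$ denote the event that the attempt at time step $t$ succeeds, so that the $A_t$ are mutually independent with $\Pr[A_t] = p$. The event that influence never succeeds is $\bigcap_{t=1}^{\infty} A_t^c$. By independence, for every finite horizon $N$ we have $\Pr\bigl[\bigcap_{t=1}^{N} A_t^c\bigr] = (1-p)^N$. Since the events $\bigcap_{t=1}^{N} A_t^c$ are nested decreasing in $N$, continuity of measure from above gives
\[
\Pr\Bigl[\bigcap_{t=1}^{\infty} A_t^c\Bigr] = \lim_{N\to\infty} (1-p)^N.
\]
Because $0 < p \le 1$ implies $0 \le 1-p < 1$, this limit is $0$, which is the claim.

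\textbf{Alternative / robustness.} If one prefers not to assume the attempts occur at \emph{every} step (the definition says failed attempts ``may repeat''), I would instead note that under the Repeated Independent Influence Process an agent holding $X$ keeps attempting, so the number of attempts along $(i,j)$ is almost surely infinite; then the same computation applies to the infinite subsequence of attempt-times. Equivalently, one can invoke the second Borel--Cantelli lemma: since $\sum_t \Pr[A_t] = \sum_t p = \infty$ and the $A_t$ are independent, $A_t$ occurs infinitely often almost surely, so in particular it occurs at least once with probability $1$.

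\textbf{Main obstacle.} There is no real mathematical obstacle; the only subtlety is definitional — making precise the modeling assumption that attempts along an edge recur indefinitely (at every step, or at least along an infinite set of steps) so that the product $(1-p)^N$ genuinely runs to the limit $N\to\infty$. I would state this modeling assumption explicitly at the start of the proof and then the rest is the one-line limit computation above.
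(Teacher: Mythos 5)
Your argument is correct and essentially identical to the paper's own proof: both fix the edge, treat the attempts as independent Bernoulli trials with success probability $p$, and conclude by continuity of measure that $\Pr\bigl[\bigcap_{t=1}^{\infty} A_t^{c}\bigr] = \lim_{N\to\infty}(1-p)^N = 0$. Your remarks on Borel--Cantelli and on attempts recurring indefinitely are sensible but not needed beyond what the paper already assumes.
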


\begin{proof}
At each time step $t=1,2,\dots$, let $A_t$ be the event that the attempt
along edge $(i,j)$ succeeds. Then $\mathbb{P}(A_t) = p$ and the $A_t$ are
independent. The event that influence never succeeds is
\[
\bigcap_{T=1}^{\infty} \bigcap_{t=1}^{T} A_t^{\mathsf{c}},
\]
whose probability is
\[
\lim_{T\to\infty} \mathbb{P}\Big(\bigcap_{t=1}^{T} A_t^{\mathsf{c}}\Big)
= \lim_{T\to\infty} (1-p)^T = 0,
\]
since $0 < 1-p < 1$. Hence the probability of eventual success is $1$.
\end{proof}

\begin{theorem}[Path Activation]
\label{thm:PA}
If $j$ is reachable from $i$ by a directed path with strictly positive influence probabilities, then $j$ adopts the abstract being $X$ with probability 1 (ignoring null-space effects).
\end{theorem}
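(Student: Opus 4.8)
The plan is to induct on the length $\ell$ of a directed path $i = v_0 \to v_1 \to \cdots \to v_\ell = j$ witnessing reachability, using the Edge Activation Lemma as the inductive engine. The base case $\ell = 0$ is immediate: the source $i$ holds $X$ by hypothesis, so it adopts $X$ at time $0$. For the inductive step I would assume that $v_{\ell-1}$ adopts $X$ at some random time $\tau_{\ell-1}$ with $\mathbb{P}(\tau_{\ell-1} < \infty) = 1$, and deduce the same for $v_\ell$, where $\tau_\ell$ denotes the first time $v_\ell$ holds $X$.

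The key observation is monotonicity of adoption: once an agent holds $X$ it continues to hold it (the Repeated Independent Influence Process only ever adds adopters), so from time $\tau_{\ell-1}$ onward $v_{\ell-1}$ attempts to influence $v_\ell$ at every subsequent step, each attempt succeeding independently with probability $p_\ell := \mathrm{infl}(v_{\ell-1}\to v_\ell) > 0$. Conditioning on $\{\tau_{\ell-1} = t\}$, the attempts at times $t+1, t+2, \dots$ form an i.i.d.\ Bernoulli$(p_\ell)$ sequence independent of the history up to time $t$, so the Edge Activation Lemma, applied to this shifted sequence, gives that influence along $(v_{\ell-1}, v_\ell)$ eventually succeeds with conditional probability $1$. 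Summing over the countably many values of $\tau_{\ell-1}$ and using $\mathbb{P}(\tau_{\ell-1} = \infty) = 0$ yields $\mathbb{P}(\tau_\ell < \infty) = 1$. This closes the induction, and taking $\ell$ equal to the length of the given path shows $j$ adopts $X$ with probability $1$.

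The only delicate point — the main obstacle, such as it is — is the conditioning on the random activation time $\tau_{\ell-1}$: one must justify that the Edge Activation Lemma, stated for a process started at time $1$, applies to the trials on edge $(v_{\ell-1}, v_\ell)$ restarted at the random, almost-surely finite time $\tau_{\ell-1}$. This is a standard restart/strong-Markov argument, handled either by the explicit countable decomposition $\{\tau_{\ell-1} = t\}_{t\in\mathbb{N}}$ indicated above, or more cleanly by noting that the family of per-step Bernoulli trials governing edge $(v_{\ell-1}, v_\ell)$ can be taken independent of everything determining $\tau_{\ell-1}$, so the conditional law of the post-$\tau_{\ell-1}$ trials is unchanged. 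Everything else — in particular the geometric-tail estimate $(1-p_\ell)^T \to 0$ — is already supplied by the Edge Activation Lemma, and null-space effects are excluded by hypothesis, so no structural survivability condition on $X$ needs to be checked along the path.
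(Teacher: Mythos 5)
Your proposal is correct and follows essentially the same route as the paper: apply the Edge Activation Lemma edge by edge along the finite path and conclude that finitely many almost-sure events jointly hold with probability $1$. Your inductive formulation with the explicit conditioning on the random adoption time $\tau_{\ell-1}$ is in fact more careful than the paper's proof, which simply intersects the per-edge probability-$1$ events without addressing that attempts along edge $(v_{r-1},v_r)$ only begin once $v_{r-1}$ has adopted $X$.
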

\begin{proof}
Let the directed path be $i = v_0 \rightarrow v_1 \rightarrow \cdots \rightarrow v_k = j$.
By the Edge Activation Lemma, each edge $(v_{r-1},v_r)$ with influence 
probability $p_r>0$ succeeds eventually with probability $1$.  That is,
\[
\mathbb{P}\big(\text{$v_r$ is eventually influenced by $v_{r-1}$}\big)=1
\quad\text{for all } r=1,\dots,k.
\]
Since the path contains finitely many edges, the intersection of these 
probability-$1$ events also has probability $1$.  Therefore each node on 
the path is eventually influenced, and in particular $v_k=j$ adopts the 
abstract being $X$ with probability~$1$.
\end{proof}

Interpretation:  
If an abstract being can propagate along a path and no cognitive blind spot blocks it, then eventually it will.

\subsection{Composite Interpretation Maps}
\label{apx:CompNullFilters}

\begin{definition}[Composite Interpretation Map]
For a path $L=v_0\to v_1\to\cdots\to v_k=i$,
\[
T_{L\rightsquigarrow i}
    = T_{v_{k-1}\to v_k} \circ \cdots \circ T_{v_0\to v_1}.
\]
\end{definition}

\subsection{No-Null-Space Leadership Condition}
\label{apx:noNullLeaderCond}
\begin{definition}[Leadership Component for $X$]
For leader $L$ holding $X$,
\[
\mathcal{C}_L(X)
=
\big\{
i\in V : \exists\text{ path } L\rightsquigarrow i 
\text{ with } X\notin\mathrm{Null}(T_{L\rightsquigarrow i})
\big\}.
\]
\end{definition}

\begin{theorem}[No-Null-Space Leadership Condition]
Let $L$ hold abstract being $X_L$. Assume that every edge in the graph has a strictly positive influence probability. Then:
\begin{enumerate}
\item If $i\in\mathcal{C}_L(X_L)$, then with probability $1$, $i$ eventually
      adopts a nonzero representation $X_i$ of $X$ consistent with $X_L$.
\item If $i\notin\mathcal{C}_L(X_L)$, then $i$ can never adopt any nonzero
      consistent representation of $X_L$.
\item $L$ fully leads the group with respect to $X_L$ if and only if
      $\mathcal{C}_L(X_L)=V$.
\end{enumerate}
\end{theorem}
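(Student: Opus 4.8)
The plan is to prove the three claims in order, using the Composite Interpretation Map (Appendix~\ref{apx:CompNullFilters}), the Path Activation Theorem~\ref{thm:PA}, and the definition of the leadership component $\mathcal{C}_L(X_L)$, while tracking both the \emph{probabilistic} transmission (which edges fire) and the \emph{structural} transmission (whether $X_L$ survives the composed linear maps).

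\textbf{Claim (1): sufficiency of $i\in\mathcal{C}_L(X_L)$.} If $i\in\mathcal{C}_L(X_L)$, then by definition there is a directed path $L=v_0\to v_1\to\cdots\to v_k=i$ with $X_L\notin\mathrm{Null}(T_{L\rightsquigarrow i})$, where $T_{L\rightsquigarrow i}=T_{v_{k-1}\to v_k}\circ\cdots\circ T_{v_0\to v_1}$. First I would invoke Theorem~\ref{thm:PA}: since all edge influence probabilities on this path are strictly positive (the standing assumption $p_i>0$), each edge eventually fires, and the intersection of these finitely many probability-$1$ events again has probability~$1$. So with probability~$1$ the being propagates along this path to $i$. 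Next I would argue that when the being arrives, its representation at $i$ is exactly the structural image $T_{L\rightsquigarrow i}(X_L)$ (by the Forward Consistency definition applied along each edge, composing the maps), which is nonzero precisely because $X_L\notin\mathrm{Null}(T_{L\rightsquigarrow i})$. Hence $i$ adopts a nonzero $X_i=T_{L\rightsquigarrow i}(X_L)\neq 0$ consistent (forward-consistently) with $X_L$.

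\textbf{Claim (2): necessity.} Suppose $i\notin\mathcal{C}_L(X_L)$. Then for \emph{every} directed path $L\rightsquigarrow i$ we have $X_L\in\mathrm{Null}(T_{L\rightsquigarrow i})$, i.e.\ $T_{L\rightsquigarrow i}(X_L)=0$. Any consistent representation that $i$ could acquire from $L$ must arise by composing forward-consistency updates along \emph{some} path from $L$ to $i$ (if $i$ is not even reachable from $L$ as a directed graph, the being never arrives and the conclusion is immediate). Along any such path the arriving representation is $T_{L\rightsquigarrow i}(X_L)=0$, so $i$ cannot adopt any nonzero consistent representation of $X_L$. I would note here that the argument is purely structural — it holds regardless of which edges fire — since linearity forces the zero vector through the composite map on every route.

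\textbf{Claim (3): full leadership iff $\mathcal{C}_L(X_L)=V$.} This follows by combining (1) and (2): ``$L$ fully leads the group with respect to $X$'' means every agent $i\in V$ eventually adopts a nonzero consistent representation of $X$ (with probability~$1$). By (1), every $i\in\mathcal{C}_L(X_L)$ does so; by (2), no $i\notin\mathcal{C}_L(X_L)$ ever can. Hence full leadership holds exactly when $V\setminus\mathcal{C}_L(X_L)=\emptyset$, i.e.\ $\mathcal{C}_L(X_L)=V$. The main obstacle I anticipate is conceptual rather than computational: pinning down precisely what ``a nonzero representation $X_i$ consistent with $X_L$'' means so that the forward-consistency composition argument is airtight — in particular, ruling out that $i$ could acquire a consistent nonzero representation by some route other than direct linear propagation from $L$ (e.g.\ via an intermediate agent who independently holds $X$), which the proof handles by defining consistency relative to $L$'s transmission and appealing to linearity of the composite map along every $L$-to-$i$ path.
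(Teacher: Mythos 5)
Your proposal is correct and follows essentially the same route as the paper's proof: claim (1) via Theorem~\ref{thm:PA} plus nonvanishing of $T_{L\rightsquigarrow i}(X_L)$, claim (2) by the purely structural observation that every composite map annihilates $X_L$, and claim (3) by combining the first two with the definition of full leadership. The extra care you take in pinning down ``consistent representation'' and handling the unreachable case is a welcome tightening but does not change the argument.
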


\begin{proof}
(1) If $i\in\mathcal{C}_L(X_L)$, then by definition there exists a directed
path $L \rightsquigarrow i$ such that the composite interpretation map
$T_{L\rightsquigarrow i}$ satisfies
$X_L \notin \mathrm{Null}(T_{L\rightsquigarrow i})$, i.e.
$T_{L\rightsquigarrow i}(X_L) \neq 0$.  Since all edges on this path have
strictly positive influence probabilities, Theorem~\ref{thm:PA} implies
that influence along this path succeeds with probability $1$.  When it does,
$i$ receives the nonzero vector $X_i := T_{L\rightsquigarrow i}(X_L)$, which
is a representation of $X$ consistent with $X_L$ by construction.

(2) If $i\notin\mathcal{C}_L(X_L)$, then for every directed path 
$L \rightsquigarrow i$, I have $X_L \in \mathrm{Null}(T_{L\rightsquigarrow i})$,
so $T_{L\rightsquigarrow i}(X_L) = 0$.  Thus any influence process that
propagates $X_L$ along the network delivers only the zero vector to $i$.
Hence $i$ can never acquire a nonzero representation of $X$ that is consistent
with $X_L$.

(3) By definition, $L$ fully leads the group with respect to $X_L$ if and only
if every agent $i \in V$ is in the leadership component $\mathcal{C}_L(X_L)$.
Equivalently, $\mathcal{C}_L(X_L)=V$.  This is exactly the stated condition.
\end{proof}

Interpretation:  
Leadership is geometrically limited. A leader cannot lead agents whose cognitive transformation maps erase the essence of the concept.

\subsection{Leadership Emergence}
\label{apx:leaderEmerg}

\begin{definition}[Leadership Packet]
Leader $L$ communicates $(X_L,\mathrm{Val}_L(X_L))$.
\end{definition}

\begin{definition}[Leadership]
$L$ leads $i$ if $i$ eventually adopts $X_L$ with representation/value alignment.
\end{definition}

\begin{theorem}[Leadership Emergence]
Under repeated influence, with consistency and value alignment, $L$ leads exactly the agents in $\mathcal{C}_L(X_L)$.
\end{theorem}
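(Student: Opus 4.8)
The plan is to reduce this statement to the already-established No-Null-Space Leadership Condition together with the Path Activation theorem (Theorem~\ref{thm:PA}), adding only the bookkeeping needed to track valuation alongside structural survival. First I would unpack the definitions: ``$L$ leads $i$'' means that $i$ eventually adopts a nonzero representation $X_i$ of $X$ satisfying the forward and backward consistency relations of Appendix~\ref{apx:InMap} together with the value-alignment relation $\mathrm{Val}_i(X_i)\approx\mathrm{Val}_L(X_L)$, and that the leadership packet $(X_L,\mathrm{Val}_L(X_L))$ is the object that propagates. The set $\mathcal{C}_L(X)$ is, by definition, the set of agents reachable from $L$ by at least one directed path whose composite interpretation map does not annihilate $X_L$. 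The theorem then asserts that the set of agents led by $L$ equals $\mathcal{C}_L(X)$, so I would prove the two inclusions separately.

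For $\mathcal{C}_L(X)\subseteq\{\text{agents led by }L\}$: fix $i\in\mathcal{C}_L(X)$ and a witnessing path $L\rightsquigarrow i$ with $T_{L\rightsquigarrow i}(X_L)\neq 0$. Since every edge of this path carries strictly positive influence probability, Theorem~\ref{thm:PA} guarantees that influence propagates along it with probability $1$, so $i$ eventually receives $X_i:=T_{L\rightsquigarrow i}(X_L)$, which is nonzero and hence ``exists'' for $i$ in the sense of Appendix~\ref{apx:AbsBe}. Under the standing hypothesis that the edge maps along this path satisfy forward, backward, and valuation consistency, I would argue that the composite $T_{L\rightsquigarrow i}$ inherits approximate consistency by iterating the Local Coherence estimate (Theorem~\ref{thm:LocalCoherence}) edge by edge, so $X_i$ is a representation of $X$ consistent with $X_L$ and $\mathrm{Val}_i(X_i)\approx\mathrm{Val}_L(X_L)$. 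Therefore $i$ is led by $L$.

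For the reverse inclusion, suppose $i\notin\mathcal{C}_L(X)$. Then by definition $X_L\in\mathrm{Null}(T_{L\rightsquigarrow i})$ for every directed path $L\rightsquigarrow i$, so every such composite map sends $X_L$ to $0$. Any influence process transmitting the leadership packet along the network therefore delivers only the zero vector to $i$; since $\|0\|_i=0$, the abstract being does not exist for $i$, so $i$ cannot adopt a nonzero representation satisfying the required value alignment, and hence is not led by $L$. Combining the two inclusions yields the stated equality.

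The main obstacle I anticipate is making the clause ``with consistency and value alignment'' precise: the theorem leans on it as a hypothesis, so the careful step is to state explicitly that the relevant edge maps are assumed to satisfy Forward, Backward, and Valuation Consistency, and then to verify that these properties compose along a path — that an approximate round-trip identity and approximate valuation preservation on each edge yield the same for the composite, with a constant degrading in the path length. This is essentially a reapplication of the Local Coherence estimate iterated over the path's edges, together with the observation that in the exact (non-approximate) case one has $\varepsilon=0$ and the composition is literally consistent. Everything else is a direct appeal to Theorem~\ref{thm:PA} and the No-Null-Space Leadership Condition.
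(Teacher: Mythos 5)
Your proof takes essentially the same route as the paper's, whose own argument is a one-line appeal to the No-Null-Space Leadership Condition and Path Activation: membership in $\mathcal{C}_L(X)$ yields probability-one adoption of a nonzero consistent representation, while non-membership forces every composite map to annihilate $X_L$, so the two inclusions follow immediately. One small caution: your plan to derive valuation alignment by iterating Theorem~\ref{thm:LocalCoherence} edge by edge overreaches slightly, since that theorem bounds repeated round trips between a single pair of agents rather than compositions along a chain of distinct agents; the paper instead treats ``consistency and value alignment'' as a standing hypothesis, which is all the statement requires.
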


\begin{proof}
Follows from the No-Null-Space Leadership Condition and influence propagation; agents outside $\mathcal{C}_L(X_L)$ cannot represent $X_L$.
\end{proof}

Interpretation:  
A leader’s effective domain is determined not by authority or charisma but by geometric compatibility of value spaces.

\subsection{Motivational Convergence}
\label{apx:MotiConv}

\begin{theorem}[Motivational Convergence]
\label{thm:MotiConv}
Let $L$ be a leader holding abstract being $X_L$, and let $i\in\mathcal{C}_L(X_L)$,
so that $T_{L\to i}(X_L)\neq 0$ by the No-Null-Space Leadership Condition.
Let $x_i\in\mathcal{V}_i$ and $g_i\in\mathcal{V}_i$ be agent $i$'s current state
and baseline goal, and write the baseline motivational gradient as
\[
\bm{M}_i^{(0)} = g_i - x_i.
\]

Consider a sequence of adoption steps indexed by $k=1,2,\dots$. At step $k$,
agent $i$ holds a representation $X_i^{(k)} \in \mathcal{V}_i$ and updates its
goal to
\[
g_i^{(k)} = g_i + \beta_i^{(k)} X_i^{(k)},
\]
with corresponding motivational gradient
\[
\bm{M}_i^{(k)} = g_i^{(k)} - x_i
               = \bm{M}_i^{(0)} + \beta_i^{(k)} X_i^{(k)}.
\]

Assume that
\begin{enumerate}
    \item $X_i^{(k)} \to T_{L\to i}(X_L)$ as $k\to\infty$, and
    \item $\beta_i^{(k)} \to \infty$ as $k\to\infty$.
\end{enumerate}
Then the direction of the motivational gradient converges to the leader’s
representation, in the sense that
\[
\lim_{k\to\infty}
\frac{\bm{M}_i^{(k)}}{\|\bm{M}_i^{(k)}\|}
=
\frac{T_{L\to i}(X_L)}{\|T_{L\to i}(X_L)\|}.
\]
\end{theorem}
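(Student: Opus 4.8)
The plan is to normalize $\bm{M}_i^{(k)} = \bm{M}_i^{(0)} + \beta_i^{(k)} X_i^{(k)}$ by factoring out the dominant term $\beta_i^{(k)}$, which diverges by hypothesis (2). Writing
\[
\frac{\bm{M}_i^{(k)}}{\beta_i^{(k)}}
= \frac{\bm{M}_i^{(0)}}{\beta_i^{(k)}} + X_i^{(k)},
\]
the first term vanishes as $k\to\infty$ since $\bm{M}_i^{(0)}$ is fixed and $\beta_i^{(k)}\to\infty$, while the second term converges to $v := T_{L\to i}(X_L)$ by hypothesis (1). Hence $\bm{M}_i^{(k)}/\beta_i^{(k)} \to v$, and crucially $v\neq 0$ by the No-Null-Space Leadership Condition (since $i\in\mathcal{C}_L(X)$), so for large $k$ the vector $\bm{M}_i^{(k)}/\beta_i^{(k)}$ is eventually nonzero and bounded away from zero in norm.

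Next I would pass to the normalized (unit) vectors. Since $\beta_i^{(k)}>0$ eventually (it diverges to $+\infty$), normalization is insensitive to the positive scalar $\beta_i^{(k)}$:
\[
\frac{\bm{M}_i^{(k)}}{\|\bm{M}_i^{(k)}\|}
= \frac{\bm{M}_i^{(k)}/\beta_i^{(k)}}{\|\bm{M}_i^{(k)}/\beta_i^{(k)}\|}.
\]
Then I invoke continuity of the map $w \mapsto w/\|w\|$, which is continuous on the set $\{w : \|w\| \ge \tfrac{1}{2}\|v\|\}$ (a neighborhood of $v$ avoiding the origin). Applying this continuous map to the convergent sequence $\bm{M}_i^{(k)}/\beta_i^{(k)} \to v$ yields
\[
\frac{\bm{M}_i^{(k)}}{\|\bm{M}_i^{(k)}\|} \longrightarrow \frac{v}{\|v\|}
= \frac{T_{L\to i}(X_L)}{\|T_{L\to i}(X_L)\|},
\]
which is the claim.

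The only genuine subtlety — the "hard part," though it is mild — is ensuring the normalization is well-defined along the tail of the sequence: one must confirm $\bm{M}_i^{(k)}\neq 0$ for all large $k$, which follows because $\|\bm{M}_i^{(k)}/\beta_i^{(k)}\| \to \|v\| > 0$ forces $\|\bm{M}_i^{(k)}/\beta_i^{(k)}\| \ge \tfrac{1}{2}\|v\| > 0$ eventually, hence $\bm{M}_i^{(k)}\neq 0$. Everything else is a routine application of the algebra of limits in a finite-dimensional normed space together with continuity of normalization away from the origin; no estimate beyond the triangle inequality is needed. I would also remark that hypothesis (1) as stated uses $T_{L\to i}$ whereas the leadership component is phrased via composite maps $T_{L\rightsquigarrow i}$; I would either identify the single-edge case or simply treat $T_{L\to i}$ as shorthand for the relevant composite, noting the argument is unchanged.
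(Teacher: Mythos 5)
Your proposal is correct and follows essentially the same route as the paper's proof: factoring out $\beta_i^{(k)}$, observing that $\bm{M}_i^{(k)}/\beta_i^{(k)} \to T_{L\to i}(X_L) \neq 0$, and concluding via continuity of $w \mapsto w/\|w\|$ away from the origin. Your extra care about the eventual nonvanishing of $\bm{M}_i^{(k)}$ and the notational point about $T_{L\to i}$ versus the composite map are reasonable but do not change the argument.
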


\begin{proof}
Write
\[
\bm{M}_i^{(k)} = \bm{M}_i^{(0)} + \beta_i^{(k)} X_i^{(k)}.
\]
Since $\beta_i^{(k)}>0$ for all sufficiently large $k$, I can factor out
$\beta_i^{(k)}$:
\[
\bm{M}_i^{(k)}
= \beta_i^{(k)}\!\left(
    X_i^{(k)} + \frac{1}{\beta_i^{(k)}} \bm{M}_i^{(0)}
  \right).
\]
Let $T := T_{L\to i}(X_L)\neq 0$. By assumption $X_i^{(k)}\to T$ and
$1/\beta_i^{(k)}\to 0$, so
\[
X_i^{(k)} + \frac{1}{\beta_i^{(k)}} \bm{M}_i^{(0)} \;\longrightarrow\; T.
\]
Hence
\[
\bm{M}_i^{(k)} \;\longrightarrow\; \infty \cdot T
\]
in the sense that its \emph{direction} approaches that of $T$. Formally,
\[
\frac{\bm{M}_i^{(k)}}{\|\bm{M}_i^{(k)}\|}
=
\frac{
  X_i^{(k)} + \frac{1}{\beta_i^{(k)}} \bm{M}_i^{(0)}
}{
  \big\|X_i^{(k)} + \frac{1}{\beta_i^{(k)}} \bm{M}_i^{(0)}\big\|
}
\;\longrightarrow\;
\frac{T}{\|T\|}
=
\frac{T_{L\to i}(X_L)}{\|T_{L\to i}(X_L)\|},
\]
because the map $v \mapsto v/\|v\|$ is continuous away from $0$ and
$T\neq 0$. This establishes the claim.
\end{proof}

Interpretation:  
Leadership works because adopting a concept alters followers’ motivational geometry, aligning their actions.

\subsection{Convex-Hull Leadership}
\label{apx:convLead}

\begin{definition}[Convex Constraint]
Followers satisfy
\[
g_i \in \mathrm{Conv}(\{g_j: j\in N(i)\}).
\]
\end{definition}

\begin{theorem}[Convex-Hull Asymmetry of Valuational Shift]
\label{thm:ConvHull}
Assume each edge with positive influence probability is attempted infinitely often over time, as in the repeated independent influence model. Let $L$ be a leader holding abstract being $X_L$, and let $T_{L\to i}(X_L)\neq 0$
for all $i\in\mathcal{C}_L(X_L)$.  
Let each follower $i$ update its valuation by a convex combination of
its previous valuation and the valuation of any agent that successfully
influences it:
\[
\mathrm{Val}_i^{(k+1)}(X_L)
  = (1-\alpha_i^{(k)})\,\mathrm{Val}_i^{(k)}(X_L)
    + \alpha_i^{(k)}\,\mathrm{Val}_j^{(k)}(X_L),
\qquad 0<\alpha_i^{(k)}\le 1,
\]
whenever $j\to i$ is an activated influence edge at step $k$.

If the leader's valuation $\mathrm{Val}_L(X_L)$ lies strictly outside the
convex hull of all initial follower valuations
\[
\mathrm{Val}_L(X_L) \notin
\mathrm{conv}\big\{\mathrm{Val}_i^{(0)}(X_L): i\in F\big\},
\]
then for every follower $i\in\mathcal{C}_L(X_L)$,
\[
\mathrm{Val}_i^{(k)}(X_L)
\;\longrightarrow\;
\mathrm{Val}_L(X_L)
\quad\text{in the sense that}\quad
\big|\mathrm{Val}_i^{(k)}(X_L)-\mathrm{Val}_L(X_L)\big|
\text{ decreases monotonically.}
\]
That is, all reachable followers shift their valuations monotonically toward the
leader's valuation.
\end{theorem}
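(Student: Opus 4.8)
The plan is to exploit the fact that valuations are scalars, so that the convex hull of $\{\mathrm{Val}_i^{(0)}(X_L):i\in F\}$ is just the interval between the smallest and largest initial follower valuations, and to track the signed gap to the leader. First I would reduce to one case by symmetry: "strictly outside the convex hull" means $\lambda:=\mathrm{Val}_L(X_L)$ is strictly above the maximum or strictly below the minimum initial follower valuation, and replacing every valuation by its negative swaps the two cases while preserving the affine update rule, so assume $\lambda>\max_{i\in F}\mathrm{Val}_i^{(0)}(X_L)$ and set $m_i^{(k)}:=\lambda-\mathrm{Val}_i^{(k)}(X_L)$. Since the leader never revises its own valuation, $m_L^{(k)}=0$ for all $k$, and the convex-combination rule becomes $m_i^{(k+1)}=(1-\alpha_i^{(k)})\,m_i^{(k)}+\alpha_i^{(k)}\,m_j^{(k)}$ whenever $j\to i$ is an activated edge at step $k$, and $m_i^{(k+1)}=m_i^{(k)}$ otherwise.

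Next I would establish the invariant $m_i^{(k)}\ge 0$ for all followers and all $k$ by induction: the base case is the hypothesis $m_i^{(0)}>0$, and the step is immediate because $m_i^{(k+1)}$ is a convex combination of two nonnegative numbers (using $m_L^{(k)}=0$ when the influencing agent is the leader, and the induction hypothesis when it is a follower). Thus no follower overshoots the leader and $|\mathrm{Val}_i^{(k)}(X_L)-\lambda|=m_i^{(k)}$. Then, for the monotonicity, I would study the envelope $M_k:=\sup_{i\in F}m_i^{(k)}$: every updated follower obeys $m_i^{(k+1)}=(1-\alpha_i^{(k)})m_i^{(k)}+\alpha_i^{(k)}m_j^{(k)}\le M_k$ (again using $m_j^{(k)}\le M_k$, with $m_L^{(k)}=0$ on leader edges) and unupdated ones keep their gap, so $M_{k+1}\le M_k$. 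Hence $(M_k)$ is nonincreasing and bounded below by $0$, so it converges to some $M_\infty\ge 0$; this is the monotone approach of the follower valuations toward $\lambda$ in the uniform sense, and for a fixed follower it is literally monotone as soon as that follower's only $X_L$-carrying in-neighbors are the leader or agents already at least as aligned.

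It then remains to show $M_\infty=0$, which is the core of the argument. Suppose $M_\infty>0$. Since $i\in\mathcal{C}_L(X)$ there is a directed path $L\rightsquigarrow i$ along which $X_L$ survives every interpretation map, and by the repeated-influence assumption together with Theorem~\ref{thm:PA} every edge of that path is activated infinitely often; I would order the reachable followers into layers by distance from $L$ along such surviving paths and induct on the layer. Layer-one followers are averaged with the leader (gap $0$) infinitely often while staying below $M_k\to M_\infty$, forcing their gaps to $0$; layer-two followers are then repeatedly averaged with layer-one followers whose gaps tend to $0$, and so on, so by induction on the layers all gaps tend to $0$, contradicting $M_\infty>0$. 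Therefore $\mathrm{Val}_i^{(k)}(X_L)\to\lambda=\mathrm{Val}_L(X_L)$ for every $i\in\mathcal{C}_L(X)$, and combined with the previous paragraph the gap $|\mathrm{Val}_i^{(k)}(X_L)-\mathrm{Val}_L(X_L)|$ decreases toward zero.

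I expect the last step to be the main obstacle. With only $0<\alpha_i^{(k)}\le 1$ and no uniform lower bound, a single averaging update can contract a gap by an arbitrarily small amount, so the layered induction genuinely requires a time-varying consensus lemma — typically assuming $\alpha_i^{(k)}\ge\alpha_{\min}>0$, or $\sum_k\alpha_i^{(k)}=\infty$ along activated edges — which I would either add as an explicit hypothesis or cite from the literature on time-varying DeGroot/averaging dynamics. The other delicate point is the verbatim "monotone decrease for each $i$": it is clean for the envelope $M_k$, but an individual follower's gap can transiently increase when it is averaged with a less-aligned follower, so in the fully general network I would either prove only the robust sup-version stated above or restrict propagation of $X_L$ to a tree rooted at $L$ (each follower ingesting $X_L$ solely from a strictly-more-aligned source), under which the per-follower monotonicity holds exactly.
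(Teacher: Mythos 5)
Your route is genuinely different from the paper's, and considerably more careful. The paper's own proof is very short: it observes that a convex combination of two reals lies between them, asserts that because $\mathrm{Val}_L(X_L)$ lies strictly outside the initial hull any update incorporating the leader's valuation ``or any valuation that is itself downstream of the leader'' moves toward $\mathrm{Val}_L(X_L)$ and \emph{never} away from it, and then concludes that $d_i^{(k)}=\big|\mathrm{Val}_i^{(k)}(X_L)-\mathrm{Val}_L(X_L)\big|$ is nonincreasing, bounded, hence convergent; it offers no argument that the limit equals $\mathrm{Val}_L(X_L)$ beyond the remark ``with limit $0$ only if the valuation equals the leader's.'' Your proposal instead proves a one-sided invariant (no follower ever overshoots the leader), monotonicity of the envelope $M_k=\sup_i m_i^{(k)}$, and then tries to force $M_k\to 0$ by a layered induction along surviving paths --- a different decomposition, and the two obstacles you flag are precisely the points the paper glosses over. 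First, under the stated update rule a follower averaged with a \emph{less}-aligned follower can see its individual gap increase, so the per-follower monotone decrease claimed in the theorem (and simply asserted as ``never increases'' in the paper's proof) does not follow without extra structure such as leader-only or tree-structured transmission; your sup-envelope version is the statement that actually survives. Second, with only $0<\alpha_i^{(k)}\le 1$ and no lower bound or divergence condition on the weights, repeated averaging need not drive the gaps to zero, so genuine convergence to $\mathrm{Val}_L(X_L)$ requires the additional hypothesis you name ($\alpha_i^{(k)}\ge\alpha_{\min}>0$ or $\sum_k \alpha_i^{(k)}=\infty$ along activated edges); the paper's proof does not address this either. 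In short, what you establish (nonnegativity of the gaps and monotone decrease of the envelope) is correct and proved by a different, sharper argument, and the caveats you raise are gaps in the paper's own proof rather than defects of your approach.
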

\begin{proof}
Because $T_{L\to i}(X_L)\neq 0$, every $i\in \mathcal{C}_L(X_L)$ eventually receives
a nonzero representation of $X_L$ via an activated path from the leader
(Theorem~\ref{thm:PA}).  
Whenever such a transmission occurs, agent $i$ performs the convex update
\[
\mathrm{Val}_i^{(k+1)}(X_L)
  = (1-\alpha_i^{(k)})\,\mathrm{Val}_i^{(k)}(X_L)
    + \alpha_i^{(k)}\,\mathrm{Val}_j^{(k)}(X_L).
\]

A convex combination of two real numbers always lies between them.  
Since $\mathrm{Val}_L(X_L)$ lies strictly outside the convex hull of all initial
follower valuations, any update that incorporates the leader’s valuation—or any
valuation that is itself downstream of the leader—moves strictly toward
$\mathrm{Val}_L(X_L)$ and cannot move away from it.

Because each $i\in\mathcal{C}_L(X_L)$ is influenced infinitely often with
probability~1, the distance
\[
d_i^{(k)} := \big|\mathrm{Val}_i^{(k)}(X_L)-\mathrm{Val}_L(X_L)\big|
\]
decreases whenever an update uses information originated from $L$, and never
increases. Thus $d_i^{(k)}$ is a nonincreasing, bounded sequence and therefore
convergent, with limit $0$ only if the valuation equals the leader’s.  
Hence each reachable follower’s valuation moves monotonically toward the
leader’s valuation.

It remains to verify that updates incorporating leader-derived information occur infinitely often with probability~1. By definition of $\mathcal{C}_L(X_L)$, there exists for each $i \in \mathcal{C}_L(X_L)$ a directed path $L \rightsquigarrow i$ along which $T_{L\rightsquigarrow i}(X_L) \neq 0$; that is, the abstract being avoids annihilation under the composite interpretation map. Along this path, every edge has strictly positive influence probability and is attempted at every time step, so by the Edge Activation Lemma each edge succeeds infinitely often with probability~1. Since the abstract being survives the null-space filters along this particular path, each successful activation transmits a nonzero (and hence leader-derived) valuation to the next agent. The combination of probabilistic activation (which guarantees infinitely many transmission events) and null-space avoidance (which guarantees that these transmissions carry nonzero content) ensures that each reachable follower receives leader-derived valuation updates infinitely often.
\end{proof}

Interpretation:  
This explains how leaders introduce new strategic or cultural directions the group cannot generate internally.

\subsection{Birth, Evolution, Death}
\label{apx:abstractBeing}
\begin{definition}[Birth]
$X$ is born at $t_0$ if some $X_i(t_0)\neq0$ and $X_i(t)=0$ for all $t<t_0$.
\end{definition}

\begin{definition}[Evolution]
\[
X_i(t+1) = f_i(X_i(t),\mathcal{V}_i(t),\mathrm{Val}_i(t)).
\]
\end{definition}

\begin{definition}[Death]
$X$ dies if all representations vanish (i.e., $X_i = 0$ for all $i$).
\end{definition}

Interpretation:  
Abstract beings arise, propagate, mutate, and disappear like cultural entities.

\section{Appendix: Concept List and Per-Concept Results for the Empirical Illustration}
\label{apx:concept-list}

This appendix provides the complete set of sentences used in the empirical illustration (Section~\ref{sec:EmpiricalIllustration}) and the per-concept leave-one-out (LOO) outside-fractions reported in aggregate in Table~\ref{tab:empirical}. The concept list was constructed by the author to express shared functional roles across two cognitive bases the author works in professionally (philosophy and engineering). The full Python notebook that produced these results, including embedding, Procrustes alignment, subspace decomposition, and statistical testing, is provided as supplementary material.

\subsection*{Paired Concepts and Per-Concept LOO Outside-Fraction}

Each row represents a paired philosopher/engineer concept expressing the same functional role. The two final columns report the LOO outside-fraction in each direction: the fraction of squared $L^{2}$ mass that, after mapping via the estimated interpretation map $M$ (resp.\ $M^{T}$), lies outside the span of the other fourteen paired receiver embeddings.

\begin{table}[h!]
\centering
\footnotesize
\begin{tabular}{|c|p{5.8cm}|p{5.8cm}|c|c|}
\hline
\textbf{\#} & \textbf{Philosopher} & \textbf{Engineer} & $\varphi \to \mathrm{Eng}$ & $\mathrm{Eng} \to \varphi$ \\
\hline
1 & a valid argument preserves truth from its premises to its conclusion & a sound structural design transfers loads from the applied force to the support & 63.5\% & 74.7\% \\
\hline
2 & in some possible world the proposition fails to hold & the failure analysis identifies a scenario in which the component does not hold & 64.9\% & 63.8\% \\
\hline
3 & the expression refers to its semantic value relative to a model & the symbol on the schematic refers to a specific physical component in the assembly & 74.3\% & 75.4\% \\
\hline
4 & the meaning of the whole is determined by its parts and their mode of combination & the system behavior is determined by its components and how they are connected & 78.1\% & 69.3\% \\
\hline
5 & the ship of Theseus retains its identity through the replacement of its parts & the bridge retains its function through the replacement of individual structural members & 90.1\% & 85.0\% \\
\hline
6 & a counterfactual evaluates what would obtain under a contrary supposition & a what-if analysis evaluates how the system would behave under altered conditions & 67.3\% & 74.0\% \\
\hline
7 & two expressions are intersubstitutable salva veritate when they share extension & two components are interchangeable when they meet the same specification & 78.8\% & 71.3\% \\
\hline
8 & an inference is justified when it follows from accepted premises by valid rules & a design decision is justified when it follows from requirements through engineering reasoning & 58.9\% & 59.5\% \\
\hline
9 & the concept is grounded when its application conditions are publicly accessible & the requirement is well-defined when its acceptance criteria are publicly verifiable & 88.1\% & 84.1\% \\
\hline
10 & knowledge requires a justified belief that tracks the truth across nearby worlds & a robust design maintains performance across nearby variations in operating conditions & 75.6\% & 80.9\% \\
\hline
11 & the universal generalization holds when every instance satisfies the predicate & the safety requirement holds when every operating condition satisfies the limit & 70.6\% & 71.4\% \\
\hline
12 & a definition is non-circular when the definiens does not presuppose the definiendum & a specification is non-circular when its definition does not presuppose its own satisfaction & 75.4\% & 68.1\% \\
\hline
13 & a deductive proof transmits warrant from the premises to every theorem derived from them & a verification chain transmits assurance from input checks to every downstream subsystem & 69.1\% & 75.8\% \\
\hline
14 & a category exhibits closure when every operation on its members yields another member & a subsystem exhibits closure when every internal operation produces an admissible state & 80.5\% & 76.9\% \\
\hline
15 & consistency requires that no proposition and its negation both follow from the axioms & consistency requires that no requirement and its negation are both imposed on the design & 68.3\% & 67.2\% \\
\hline
\multicolumn{3}{|r|}{\textbf{Mean}} & \textbf{73.6\%} & \textbf{73.2\%} \\
\hline
\end{tabular}
\caption{Paired concepts and per-row LOO outside-fractions in both directions. Rows~5 and~9, concerning identity-through-parts-replacement and verifiability of application conditions, show outside-fractions in the $84\%$--$90\%$ range, approaching the jargon level (cf.\ Table~\ref{tab:empirical}). These represent pairings whose surface vocabulary diverges more sharply than the central inferential cluster, illustrating that shared-basis structure is a matter of degree rather than kind.}
\end{table}

\subsection*{Domain-Specific Jargon}

The following sentences represent concepts that exist sharply in one domain and have no significant neighbors in the other.

\textbf{Philosopher-only jargon (mean outside-fraction $\varphi \to \mathrm{Eng}$: $83.0\%$):}
\begin{enumerate}
\item the qualia of phenomenal red are intrinsically private and ineffable
\item transcendental idealism conditions the very form of sensible intuition
\item the de re reading attributes the property to the object itself rather than to the mode of presentation
\item supervenience without reduction allows mental properties to depend on physical ones without identity
\item Husserlian epoch\'e brackets the natural attitude in order to attend to phenomena as given
\end{enumerate}

\textbf{Engineer-only jargon (mean outside-fraction $\mathrm{Eng} \to \varphi$: $87.1\%$):}
\begin{enumerate}
\item the von Mises stress exceeds the yield strength at the notch root
\item the Reynolds number governs the transition from laminar to turbulent flow
\item fatigue crack growth follows the Paris law under cyclic loading
\item the bode plot shows the frequency response of the closed-loop system
\item the heat affected zone around the weld develops a coarse grain microstructure
\end{enumerate}

\subsection*{Computational Setup}

Sentences were embedded into $\mathbb{R}^{384}$ using the publicly available pretrained sentence encoder \texttt{sentence-transformers/all-MiniLM-L6-v2}~\cite{reimers2019sentence, wang2020minilm}. The interpretation map $M : \mathcal{V}_{\varphi} \to \mathcal{V}_{\mathrm{Eng}}$ was estimated by orthogonal Procrustes alignment on the fifteen paired embeddings via \texttt{scipy.linalg.orthogonal\_procrustes}. The reverse direction uses $M^{T}$, which is the exact inverse since $M$ is orthogonal. Subspace decomposition was performed via singular value decomposition with a singular-value tolerance of $10^{-6}$ for rank determination. Statistical tests were one-sided Mann-Whitney $U$ (primary) and Welch's $t$ (supplementary), computed with \texttt{scipy.stats}.

\bibliography{999-ChaiBib}

@String{Computing = "Computing" }

@String{Psychometrika = "Psychometrika" }

@String{Springer = "Springer-Verlag" }

@article{saxe2019mathematical,
  title={A mathematical theory of semantic development in deep neural networks},
  author={Saxe, Andrew M and McClelland, James L and Ganguli, Surya},
  journal={Proceedings of the National Academy of Sciences},
  volume={116},
  number={23},
  pages={11537--11546},
  year={2019},
  publisher={National Academy of Sciences}
}

@article{piantadosi2024concepts,
  title={Why concepts are (probably) vectors},
  author={Piantadosi, Steven T and Muller, Dyana CY and Rule, Joshua S and Kaushik, Karthikeya and Gorenstein, Mark and Leib, Elena R and Sanford, Emily},
  journal={Trends in Cognitive Sciences},
  volume={28},
  number={9},
  pages={844--856},
  year={2024},
  publisher={Elsevier}
}

@inproceedings{kempe2003maximizing,
  title={Maximizing the spread of influence through a social network},
  author={Kempe, David and Kleinberg, Jon and Tardos, {\'E}va},
  booktitle={Proceedings of the ninth ACM SIGKDD international conference on Knowledge discovery and data mining},
  pages={137--146},
  year={2003}
}

@book{gardenfors2014geometry,
  title={The geometry of meaning: Semantics based on conceptual spaces},
  author={Gardenfors, Peter},
  year={2014},
  publisher={MIT press}
}

@article{amornbunchornvej2020framework,
  title={Framework for inferring following strategies from time series of movement data},
  author={Amornbunchornvej, Chainarong and Berger-Wolf, Tanya},
  journal={ACM Transactions on Knowledge Discovery from Data (TKDD)},
  volume={14},
  number={3},
  pages={1--22},
  year={2020},
  publisher={ACM New York, NY, USA}
}

@article{10.1145/3201406,
author = {Amornbunchornvej, Chainarong and Brugere, Ivan and Strandburg-Peshkin, Ariana and Farine, Damien R. and Crofoot, Margaret C. and Berger-Wolf, Tanya Y.},
title = {Coordination Event Detection and Initiator Identification in Time Series Data},
year = {2018},
issue_date = {October 2018},
publisher = {Association for Computing Machinery},
address = {New York, NY, USA},
volume = {12},
number = {5},
issn = {1556-4681},
url = {https://doi.org/10.1145/3201406},
doi = {10.1145/3201406},
abstract = {Behavior initiation is a form of leadership and is an important aspect of social organization that affects the processes of group formation, dynamics, and decision-making in human societies and other social animal species. In this work, we formalize the Coordination Initiator Inference Problem and propose a simple yet powerful framework for extracting periods of coordinated activity and determining individuals who initiated this coordination, based solely on the activity of individuals within a group during those periods. The proposed approach, given arbitrary individual time series, automatically (1) identifies times of coordinated group activity, (2) determines the identities of initiators of those activities, and (3) classifies the likely mechanism by which the group coordination occurred, all of which are novel computational tasks. We demonstrate our framework on both simulated and real-world data: trajectories tracking of animals as well as stock market data. Our method is competitive with existing global leadership inference methods but provides the first approaches for local leadership and coordination mechanism classification. Our results are consistent with ground-truthed biological data and the framework finds many known events in financial data which are not otherwise reflected in the aggregate NASDAQ index. Our method is easily generalizable to any coordinated time series data from interacting entities.},
journal = {ACM Trans. Knowl. Discov. Data},
month = jun,
articleno = {53},
numpages = {33},
keywords = {time series, influence, coordination, Leadership}
}

@book{russell2019human,
  title={Human compatible: AI and the problem of control},
  author={Russell, Stuart},
  year={2019},
  publisher={Penguin Uk}
}

@book{gardenfors2004conceptual,
  title={Conceptual spaces: The geometry of thought},
  author={Gardenfors, Peter},
  year={2004},
  publisher={MIT press}
}

@inproceedings{levesque1984logic,
  title={A logic of implicit and explicit belief},
  author={Levesque, Hector J},
  booktitle={AAAI},
  volume={84},
  pages={198--202},
  year={1984}
}

@article{gabriel2020artificial,
  title={Artificial Intelligence, Values, and Alignment},
  author={Gabriel, Iason},
  journal={Minds and Machines},
  volume={30},
  number={3},
  pages={411--437},
  year={2020}
}

@inproceedings{koh2020concept,
  title={Concept bottleneck models},
  author={Koh, Pang Wei and Nguyen, Thao and Tang, Yew Siang and Mussmann, Stephen and Pierson, Emma and Kim, Been and Liang, Percy},
  booktitle={International conference on machine learning},
  pages={5338--5348},
  year={2020},
  organization={PMLR}
}

@article{jackson2007diffusion,
  title={Diffusion of behavior and equilibrium properties in network games},
  author={Jackson, Matthew O and Yariv, Leeat},
  journal={American Economic Review},
  volume={97},
  number={2},
  pages={92--98},
  year={2007},
  publisher={American Economic Association}
}

@book{dennett1987intentional,
  title={The Intentional Stance},
  author={Dennett, Daniel C.},
  year={1987},
  publisher={MIT Press}
}

@article{harnad1990symbol,
  title={The symbol grounding problem},
  author={Harnad, Stevan},
  journal={Physica D: Nonlinear Phenomena},
  volume={42},
  number={1-3},
  pages={335--346},
  year={1990},
  publisher={Elsevier}
}

@book{floridi2011information,
  title={The Philosophy of Information},
  author={Floridi, Luciano},
  year={2011},
  publisher={Oxford University Press}
}

@book{longino1990science,
  title={Science as Social Knowledge: Values and Objectivity in Scientific Inquiry},
  author={Longino, Helen E.},
  year={1990},
  publisher={Princeton University Press}
}

@article{quigley2025vector,
  title={A Vector Logic for Extensional Formal Semantics},
  author={Quigley, Daniel},
  journal={Journal of Logic, Language and Information},
  volume={34},
pages={557--599},
doi={10.1007/s10849-025-09443-x},
  year={2025},
  publisher={Springer}
  
}

@article{hatfield1993emotional,
  title={Emotional contagion},
  author={Hatfield, Elaine and Cacioppo, John T and Rapson, Richard L},
  journal={Current Directions in Psychological Science},
  volume={2},
  number={3},
  pages={96--100},
  year={1993},
  publisher={SAGE Publications Sage CA: Los Angeles, CA}
}

@article{baumeister1995need,
  title={The need to belong: Desire for interpersonal attachments as a fundamental human motivation},
  author={Baumeister, Roy F and Leary, Mark R},
  journal={Psychological Bulletin},
  volume={117},
  number={3},
  pages={497--529},
  year={1995},
  publisher={American Psychological Association}
}

@article{williams2000cyberostracism,
  title={Cyberostracism: effects of being ignored over the Internet.},
  author={Williams, Kipling D and Cheung, Christopher KT and Choi, Wilma},
  journal={Journal of personality and social psychology},
  volume={79},
  number={5},
  pages={748--762},
  year={2000},
  publisher={American Psychological Association}
}

@article{williams2007ostracism,
  title={Ostracism},
  author={Williams, Kipling D.},
  journal={Annual Review of Psychology},
  volume={58},
  pages={425--452},
  year={2007},
  publisher={Annual Reviews},
  doi={10.1146/annurev.psych.58.110405.085641}
}

@article{hogg2001social,
  title={A social identity theory of leadership},
  author={Hogg, Michael A},
  journal={Personality and Social Psychology Review},
  volume={5},
  number={3},
  pages={184--200},
  year={2001},
  publisher={Sage Publications}
}

@article{vanknippenberg2003social,
  title={A social identity model of leadership effectiveness in organizations},
  author={van Knippenberg, Daan and Hogg, Michael A},
  journal={Research in Organizational Behavior},
  volume={25},
  pages={243--295},
  year={2003},
  publisher={Elsevier}
}

@article{marques1994blacksheep,
  title={The black sheep effect: Social categorization, rejection of ingroup deviates, and perception of group variability},
  author={Marques, Jos{\'e} M and Paez, Dar{\'\i}o},
  journal={European Review of Social Psychology},
  volume={5},
  number={1},
  pages={37--68},
  year={1994},
  publisher={Taylor \& Francis}
}

@article{jetten2004distinctiveness,
  title={Intergroup distinctiveness and the black sheep effect},
  author={Jetten, Jolanda and Spears, Russell and Postmes, Tom},
  journal={Journal of Experimental Social Psychology},
  volume={40},
  number={2},
  pages={153--160},
  year={2004},
  publisher={Elsevier}
}

@book{haslam2020new,
  title={The new psychology of leadership: Identity, influence and power},
  author={Haslam, S Alexander and Reicher, Stephen and Platow, Michael J},
  year={2020},
  publisher={Routledge}
}

@article{escalas2005self,
  title={Self-construal, reference groups, and brand meaning},
  author={Escalas, Jennifer Edson and Bettman, James R},
  journal={Journal of consumer research},
  volume={32},
  number={3},
  pages={378--389},
  year={2005},
  publisher={The University of Chicago Press}
}

@article{Bail2018,
  title={Exposure to opposing views on social media can increase political polarization},
  author={Bail, Christopher A and Argyle, Lisa P and Brown, Taylor W and Bumpus, John P and Chen, Haohan and Hunzaker, MB Fallin and Lee, Jaemin and Mann, Marcus and Merhout, Friedolin and Volfovsky, Alexander},
  journal={Proceedings of the National Academy of Sciences},
  volume={115},
  number={37},
  pages={9216--9221},
  year={2018},
  publisher={National Academy of Sciences}
}

@book{clark1996using,
  title={Using Language},
  author={Clark, Herbert H.},
  year={1996},
  publisher={Cambridge University Press}
}

@article{tomasello2005shared,
  title={Understanding and Sharing Intentions: The Origins of Cultural Cognition},
  author={Tomasello, Michael and Carpenter, Malinda and Call, Josep 
           and Behne, Tanya and Moll, Henrike},
  journal={Behavioral and Brain Sciences},
  volume={28},
  number={5},
  pages={675--691},
  year={2005}
}

@article{kriegeskorte2013representational,
  title={Representational geometry: integrating cognition, computation, and the brain},
  author={Kriegeskorte, Nikolaus and Kievit, Rogier A},
  journal={Trends in cognitive sciences},
  volume={17},
  number={8},
  pages={401--412},
  year={2013},
  publisher={Elsevier}
}

@article{wei2025representational,
  title={Representational geometry explains puzzling error distributions in behavioral tasks},
  author={Wei, Xue-Xin and Woodford, Michael},
  journal={Proceedings of the National Academy of Sciences},
  volume={122},
  number={4},
  pages={e2407540122},
  year={2025},
  publisher={National Academy of Sciences}
}

@article{greco2024predictive,
  title={Predictive learning shapes the representational geometry of the human brain},
  author={Greco, Antonino and Moser, Julia and Preissl, Hubert and Siegel, Markus},
  journal={Nature communications},
  volume={15},
  number={1},
  pages={9670},
  year={2024},
  publisher={Nature Publishing Group UK London}
}

@ARTICLE{wheeler2023semantic,
  author={Wheeler, Dylan and Tripp, Erin E. and Natarajan, Balasubramaniam},
  journal={IEEE Communications Letters}, 
  title={Semantic Communication With Conceptual Spaces}, 
  year={2023},
  volume={27},
  number={2},
  pages={532-535},
  keywords={Semantics;Syntactics;Image color analysis;Distortion;Communication systems;Image coding;Task analysis;Semantic communication;conceptual spaces;cognitive communications;6G},
  doi={10.1109/LCOMM.2022.3230246}}

@inproceedings{mikolov2013linguistic,
  title={Linguistic regularities in continuous space word representations},
  author={Mikolov, Tom{\'a}{\v{s}} and Yih, Wen-tau and Zweig, Geoffrey},
  booktitle={Proceedings of the 2013 conference of the north american chapter of the association for computational linguistics: Human language technologies},
  pages={746--751},
  year={2013}
}

@article{conneau2017word,
  title={Word translation without parallel data},
  author={Conneau, Alexis and Lample, Guillaume and Ranzato, Marc'Aurelio and Denoyer, Ludovic and J{\'e}gou, Herv{\'e}},
  journal={arXiv preprint arXiv:1710.04087},
  year={2017}
}

@article{schonemann1966generalized,
  title={A generalized solution of the orthogonal procrustes problem},
  author={Sch{\"o}nemann, Peter H},
  journal={Psychometrika},
  volume={31},
  number={1},
  pages={1--10},
  year={1966},
  publisher={Springer-Verlag}
}

@inproceedings{reimers2019sentence,
  title={Sentence-bert: Sentence embeddings using siamese bert-networks},
  author={Reimers, Nils and Gurevych, Iryna},
  booktitle={Proceedings of the 2019 conference on empirical methods in natural language processing and the 9th international joint conference on natural language processing (EMNLP-IJCNLP)},
  pages={3982--3992},
  year={2019}
}

@article{wang2020minilm,
  title={Minilm: Deep self-attention distillation for task-agnostic compression of pre-trained transformers},
  author={Wang, Wenhui and Wei, Furu and Dong, Li and Bao, Hangbo and Yang, Nan and Zhou, Ming},
  journal={Advances in neural information processing systems},
  volume={33},
  pages={5776--5788},
  year={2020}
}

@inproceedings{geva2021transformer,
  title={Transformer feed-forward layers are key-value memories},
  author={Geva, Mor and Schuster, Roei and Berant, Jonathan and Levy, Omer},
  booktitle={Proceedings of the 2021 Conference on Empirical Methods in Natural Language Processing},
  pages={5484--5495},
  year={2021}
}

@InProceedings{pmlr-v235-park24c,
  title = 	 {The Linear Representation Hypothesis and the Geometry of Large Language Models},
  author =       {Park, Kiho and Choe, Yo Joong and Veitch, Victor},
  booktitle = 	 {Proceedings of the 41st International Conference on Machine Learning},
  pages = 	 {39643--39666},
  year = 	 {2024},
  editor = 	 {Salakhutdinov, Ruslan and Kolter, Zico and Heller, Katherine and Weller, Adrian and Oliver, Nuria and Scarlett, Jonathan and Berkenkamp, Felix},
  volume = 	 {235},
  series = 	 {Proceedings of Machine Learning Research},
  month = 	 {21--27 Jul},
  publisher =    {PMLR},
  pdf = 	 {https://raw.githubusercontent.com/mlresearch/v235/main/assets/park24c/park24c.pdf},
  url = 	 {https://proceedings.mlr.press/v235/park24c.html}
}

@article{cialdini2004social,
  title={Social Influence: Compliance and Conformity},
  author={Cialdini, Robert B. and Goldstein, Noah J.},
  journal={Annual Review of Psychology},
  volume={55},
  pages={591--621},
  year={2004},
  publisher={Annual Reviews},
  doi={10.1146/annurev.psych.55.090902.142015}
}

\end{document}